\documentclass[11pt,table,reqno]{amsart}

\usepackage{amsfonts,amssymb,mathrsfs,kotex,graphicx,color,amscd,tikz,amsmath}
\usepackage[all]{xy}
\usepackage[onehalfspacing]{setspace}

\usepackage{todonotes}
\usepackage{hyperref}
\usepackage{ifthen}

\usetikzlibrary{calc}



\newtheorem{thm}{Theorem}[section]
\newtheorem{cor}[thm]{Corollary}
\newtheorem{lem}[thm]{Lemma}
\newtheorem{prop}[thm]{Proposition}

\theoremstyle{definition}
\newtheorem{definition}[thm]{Definition}
\newtheorem{exa}[thm]{Example}

\newcommand{\NP}{\textrm{NP}}
\newcommand{\Z}{\mathbb{Z}}
\newcommand{\R}{\mathbb{R}}

\usepackage{stmaryrd} 
\newcommand{\adj}{\leftrightarrow}
\newcommand{\adjeq}{\leftrightarroweq}

\let\int\relax
\DeclareMathOperator{\int}{int}
\let\Im\relax
\DeclareMathOperator{\Im}{Im}

\begin{document}

\baselineskip= 20.80pt

\title{Digital topological groups}

\thanks{
The first author was supported by the National Research Foundation of Korea (NRF) grant funded by the Korean government (MSIT) (No. 2018R1A2B6004407)}

\author{Dae-Woong Lee and P. Christopher Staecker}
\date{\today}

\address{
Department of Mathematics, and Institute of Pure and Applied Mathematics, Jeonbuk National University,
567 Baekje-daero, Deokjin-gu, Jeonju-si, Jeollabuk-do 54896, Republic of Korea
}
\email{dwlee@jbnu.ac.kr}

\address{
Mathematics Department, Fairfield University, 1703 North Benson Rd, Fairfield, CT 06824-5195
}
\email{cstaecker@fairfield.edu}

\subjclass[2020]{Primary 22A30; Secondary 22A10; 68U03; 05C10}
\keywords{digital image; normal product adjacency; $\NP_i$-digital topological group; digital simple closed curve; regular graph; vertex-transitive graph; Cayley graph; digital topological group homomorphism; digital open map; digital H-space}

\begin{abstract} In this article, we develop the basic theory of digital topological groups. The basic definitions directly lead to two separate categories, based on the details of the continuity required of the group multiplication. We define $\NP_1$- and $\NP_2$-digital topological groups, and investigate their properties and algebraic structure. The $\NP_2$ category is very restrictive, and we give a complete classification of $\NP_2$-digital topological groups. We also give many examples of $\NP_1$-digital topological groups. 
We define digital topological group homomorphisms, and describe the digital counterpart of the first isomorphism theorem.
\end{abstract}

\maketitle

\section{Introduction}
The topological theory of digital images focuses on topological properties of finite subsets of the  $n$-dimensional integer lattice, attempting to build a digital theory which resembles classical topology of $\R^n$. 

A graph-theoretical approach to digital topology was introduced by A. Rosenfeld in the 1960s. Interesting works on digital surfaces and digital manifolds as a special kind of combinatorial manifolds have been produced in the 1980s and in 1990s by many authors.

A classical topological group is both a topological space and a group in which the topological structure is compatible with the group structure; that is, the product and inverse are topologically continuous. A topological group is both a special case of an H-space and a general case of a Lie group in pure mathematics.

%

A theory of H-spaces has been developed in the digital setting by O. Ege and I. Karaca \cite{EK} and the first author in \cite{dh1, dh2}. 
The additional algebraic structure of groups allows the present paper to prove more specific results. This paper is organized as follows: In Section \ref{pre}, we review and describe fundamental properties of digital images with some adjacency relations, digital continuous functions, and some terminology from graph theory. We define $\NP_i$-digital topological groups for some $i \in \{1,2\}$ and digital simple closed curves. In Section \ref{dtg}, we investigate properties of digital topological groups, and construct many classes of examples. In Section \ref{dtgh}, we develop digital topological group homomorphisms as morphisms of the category of digital topological groups. We also describe the digital counterpart of the first isomorphism theorem from group theory. In Section \ref{cdtg}, we describe a complete classification of $\NP_2$-digital topological groups. 
\bigskip

\section{Preliminaries} \label{pre}

A \emph{digital image} $(X, \kappa)$ consists of a finite set $X$ of points in $\Z^n$ with some \emph{adjacency relation} $\kappa$ which is both antireflexive and symmetric. 
Typically the adjacency relation is based on some notion of adjacency of points in $\Z^n$. This style of digital topology has its origins in the work of Rosenfeld and others, see \cite{R} for an early work.

When $n\geq 2$ there is no canonical ``standard adjacency'' to use in $\Z^n$ which corresponds naturally to the standard topology of $\R^n$. In the case of $\Z^2$, for example, at least two different adjacency relations are natural: we can view $\Z^2$ as a rectangular lattice connected by the coordinate grid, so that each point is adjacent to 4 neighbors; or we can additionally allow diagonal adjacencies so that each point is adjacent to 8 neighbors. 

In $\Z^n$, there are $n$ different standard adjacency relations, denoted $c_u$ for $u\in \{1,2,\dots,n\}$, defined as follows:
Two distinct points $x =(x_1,x_2,\ldots, x_n)$ and $y =(y_1,y_2,\ldots,y_n)$ in $\mathbb Z^n$ are {\it $c_u$-adjacent} \cite{LB3} if
\begin{itemize}
\item there are at most $u$ distinct indices $i$ such that $\vert x_i - y_i\vert =1$; and
\item for all other indices $j$, if $\vert x_j - y_j \vert \neq 1$, then $x_j = y_j$.
\end{itemize}

We will make use of the notation $x \adj_\kappa y$ when $x$ is adjacent to $y$ by the adjacency relation $\kappa$ in $\Z^n$, and $x \adjeq_\kappa y$ when $x$ is adjacent or equal to $y$. The particular adjacency relation will usually be clear from context, and in this case we will omit the subscript.

Let $a$ and $b$ be nonnegative integers with $a < b$. A {\it digital interval} \cite {LB0} is a finite set of the form
$$
[a,b]_{\mathbb{Z}} = \{ z \in \mathbb Z ~\vert~ a \leq z \leq b \},
$$
with the $c_1$-adjacency relation in $\mathbb{Z}$.

A digital image $(X,\kappa)$ is said to be {\it $\kappa$-connected} \cite {R} if for every pair $x,y$ of  points of $X$, there exists a subset
$\{ x_0 ,x_1, \ldots, x_s \}\subseteq X$
\begin{itemize}
\item $x=x_0$;
\item $x_s =y$; and 
\item $x_i \adjeq x_{i+1}$ for $i = 0,1,2,\ldots,s-1$ for $s \geq 1$.
\end{itemize}
Such a set $\{ x_0 ,x_1, \ldots, x_s \}$ is called a \emph{path} (or \emph{$\kappa$-path}) from $x$ to $y$. Given some $x\in X$, the set of all points of $X$ having a path to $x$ is the \emph{connected component} of $x$. A digital image is digitally connected if and only if it consists of a single component.

A function $f : (X,\kappa) \rightarrow (Y,\lambda)$ between digital images is called a {\it $(\kappa,\lambda)$-continuous function} \cite {LB1} when for any $x_1,x_2\in X$, if $x_1 \adj x_2$, then  $f(x_1) \adjeq f(x_2)$.
Equivalently, $f$ is continuous if
$f(C)$ is a $\lambda$-connected subset of $Y$ for each $\kappa$-connected subset $C$ of $X$. When the adjacencies are clear, we simply say that $f$ is \emph{continuous}.
Any composition of digitally continuous functions is digitally continuous.

A $(\kappa,\lambda)$-continuous function of digital images
$$
f : (X,\kappa) \rightarrow (Y,\lambda)
$$
is called a {\it $(\kappa,\lambda)$-isomorphism} if $f$ is a bijection set-theoretically, and its inverse $f^{-1} : (Y,\lambda) \rightarrow (X,\kappa)$ is $(\lambda,\kappa)$-continuous. In this case, $(X,\kappa)$ and $(Y,\lambda)$ are said to be {\it $(\kappa,\lambda)$-isomorphic}; see  \cite {LB0} and \cite{LB2}. We often make use of the following fact: if $f$ is an isomorphism, then $x\adj y$ if and only if $f(x)\adj f(y)$.

Any digital image $(X,\kappa)$ can naturally be viewed as a finite simple graph with vertex set $X$ and an edge connecting $x,y\in X$ if and only if $x\adj y$. Conversely, any graph may be considered as a digital image with a standard adjacency: 
\begin{thm}[\cite{los20}, Proposition 2.5]\label{embeddingthm}
Let $X$ be a finite simple graph of $k$ vertices. Then there is some $n < k$ such that $X$ may be embedded as a digital image $X \subset [-1,1]_\Z^n$ with $c_n$-adjacency.
\end{thm}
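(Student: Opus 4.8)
The plan is to first pin down exactly what $c_n$-adjacency means on the cube $[-1,1]_\Z^n=\{-1,0,1\}^n$. Since every coordinate difference there lies in $\{-2,-1,0,1,2\}$, the two clauses in the definition of $c_n$-adjacency collapse to a single condition: two distinct points $p,q$ are $c_n$-adjacent if and only if $|p_m-q_m|\le 1$ for every coordinate $m$; equivalently, $p$ and $q$ fail to be adjacent precisely when some coordinate $m$ has $\{p_m,q_m\}=\{-1,+1\}$. With this reformulation in hand, embedding $X$ amounts to assigning to each vertex a vector in $\{-1,0,1\}^n$ so that adjacent vertices never disagree by $2$ in any coordinate, while each non-adjacent pair is separated by such an ``opposite'' coordinate. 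I would carry this out with $n=k-1$, which gives the required bound $n<k$ (the cases $k\le 1$ being trivial).

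Concretely, fix an enumeration $v_1,\dots,v_k$ of the vertices and use coordinates indexed by $m\in\{1,\dots,k-1\}$. I would define $\phi\colon X\to[-1,1]_\Z^{k-1}$ by
\[
\phi(v_i)_m=\begin{cases} 1 & \text{if } m=i,\\ -1 & \text{if } m\neq i \text{ and } v_i \not\adj v_m,\\ 0 & \text{otherwise.}\end{cases}
\]
The intuition is that coordinate $m$ acts as the ``signature slot'' of vertex $v_m$: vertex $v_i$ places a $+1$ in its own slot and a $-1$ in the slot of every non-neighbour, so that a non-neighbour $v_m$ of $v_i$ (with $m<k$) is forced into the opposite sign in coordinate $m$. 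The final vertex $v_k$ has no slot of its own and simply records $-1$ in the slots of its non-neighbours.

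Three things remain to be checked. Injectivity is immediate because, for $i<k$, coordinate $i$ of $\phi(v_i)$ equals $1$ while coordinate $i$ of every other image is $\le 0$, and $\phi(v_k)$ has no coordinate equal to $1$. For adjacency I would verify both directions using the opposite-coordinate criterion: if $v_i\adj v_j$ then no coordinate takes the values $-1$ and $+1$ on the two images (the only $+1$ entries sit in the distinct slots $i$ and $j$, where the partner entry is $0$, and all remaining entries lie in $\{-1,0\}$), so the images are $c_n$-adjacent; if $v_i\not\adj v_j$ with, say, $i<j$, then coordinate $i$ carries $\phi(v_i)_i=1$ and $\phi(v_j)_i=-1$, exhibiting the required separating coordinate. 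I expect the only real subtlety to be the asymmetry introduced by suppressing the slot of $v_k$: one must confirm that non-adjacencies involving $v_k$ are still witnessed, which they are, since for $v_i\not\adj v_k$ with $i<k$ the pair $\phi(v_i)_i=1$ and $\phi(v_k)_i=-1$ again supplies an opposite coordinate. Checking that this suppressed coordinate never destroys an intended adjacency is the one place where the argument must be made with care rather than by symmetry.
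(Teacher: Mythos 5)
Your proposal is correct. There is actually no in-paper argument to compare against: the paper quotes this statement from \cite{los20} (Proposition 2.5) without proof, so your write-up serves as a complete, self-contained proof of the cited result, and it uses the same ``signature coordinate'' style of construction as that reference. The two pillars of your argument both check out. First, on $\{-1,0,1\}^n$ the definition of $c_n$-adjacency does collapse as you say: distinct points are $c_n$-adjacent exactly when no coordinate takes the values $-1$ and $+1$ on the two points. Second, your map $\phi$ into $[-1,1]_\Z^{k-1}$ does everything required: it is injective (coordinate $i$ equals $1$ only on $\phi(v_i)$, and $\phi(v_k)$ has no $+1$ entry); it sends adjacent vertices to $c_n$-adjacent points, since for $v_i\adj v_j$ every $+1$ entry sits in slot $i$ or $j$ where the partner entry is $0$ precisely because of the adjacency, and all remaining entries lie in $\{-1,0\}$; and it separates non-adjacent vertices, since for $v_i\not\adj v_j$ with $i<j$ slot $i$ carries $\phi(v_i)_i=1$ against $\phi(v_j)_i=-1$. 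Your worry about the suppressed slot of $v_k$ is correctly resolved: a non-adjacency involving $v_k$ is always witnessed in the slot of the other vertex, and the missing slot can never create a spurious adjacency obstruction, only remove potential ones, so adjacencies involving $v_k$ survive. With $n=k-1<k$ and the degenerate cases $k\le 1$ handled trivially, the proof is complete.
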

(Above, by ``embedded'' we mean that $X$, considered as an abstract graph, is isomorphic as a graph to the digital image $X\subset  [-1,1]_\Z^n$ with $c_n$-adjacency.)

We will, whenever convenient, describe a digital image $(X,\kappa)$ in graph-theoretic terms. Specifically, our definitions above of paths, connectedness, and components correspond exactly to the same concepts in graph theory. 
We will also make use of the \emph{degree} from graph theory: for $x\in X$, the degree of $x$ is the number of $y\in X$ with $y\adj x$. If all points of $X$ have the same degree $d$, we say $X$ is \emph{$d$-regular}.
If all pairs of points of $X$ are adjacent, then $X$ is a \emph{complete graph}.

Given two digital images $X_1$ and $X_2$, we can consider the Cartesian product $X_1\times X_2$ as a digital image, but there are several natural choices for the adjacency relations to be used in the Cartesian product, analogous to the various $c_u$-adjacencies. The most natural product adjacencies are the \emph{normal product adjacencies}, which were defined by Boxer as follows:

\begin{definition}(\cite{boxe17})
Let $\{(X_i,\kappa_i) \mid i = 1,2, \cdots, n\}$ be an indexed family of digital images.
Then for some $u \in \{1,2, \dots,n\}$, the \emph{normal product adjacency} $\NP_u(\kappa_1,\kappa_2,\dots,\kappa_n)$ is the adjacency relation on $\prod_{i=1}^n X_i$ defined by: $(x_1,x_2,\dots, x_n)$ and $(x'_1,x'_2, \dots, x'_n)$ are adjacent if and only if their coordinates are adjacent in at most $u$ positions, and equal in all other positions.
\end{definition}

In this paper, our products generally have the form $X\times X$ for some digital image $X := (X,\kappa)$. On such a Cartesian product, the two natural adjacencies to choose from are $\NP_1(\kappa,\kappa)$ and $\NP_2(\kappa,\kappa)$. When unambiguous, we will abbreviate $\NP_i(\kappa,\kappa)$ as simply $\NP_i$ for $i\in \{1,2\}$. Clearly, if two points $(x_1, x_2)$ and $(y_1, y_2)$ in $X\times X$ are $\NP_1$-adjacent, then they are also $\NP_2$-adjacent. As we will see, the topological structure of the Cartesian product depends strongly on the choice between $\NP_1$ and $\NP_2$; see \cite{los20,Chris2}
for a discussion of $\NP_2$-homotopy relations.

\bigskip

\section{Digital topological groups}\label{dtg}
\subsection{Definitions and basic examples}
A classical topological group is both a topological space and a group whose binary operation and inverse are topologically continuous. The following is the digital counterparts of classical topological groups as the topological structure of the Cartesian product depending strongly on the choice between $\NP_1$ and $\NP_2$.

\begin{definition}\label{dtgdefinition}
Let $(G,\kappa)$ be a digital image such that $G$ is a group with multiplication $\mu_G:G\times G \to G$ and inverse $\iota_G: G\to G$. For some $i\in \{1,2\}$, we say $G$ is an \emph{$\NP_i$-digital topological group} when $\iota_G$ is $(\kappa,\kappa)$-continuous and $\mu_G$ is $(\NP_i(\kappa,\kappa),\kappa)$-continuous.
If $G$ is an $\NP_i$-digital topological group for some $i\in \{1,2\}$, we say $G$ is a \emph{digital topological group}.
\end{definition}

For a digital topological group $G$ with product $\mu_G$ and inverse $\iota_G$ we typically will write the product of elements $x$ and $y$ as $\mu_G(x,y) = xy$, and the inverse of an element $x$ as $\iota_G(x) = x^{-1}$.

Clearly any $\NP_2$-digital topological group is also an $\NP_1$-digital topological group. In classical topology, the $\NP_2$ continuity condition is analogous to requiring that $\mu_G$ be continuous as a function of 2 variables, while the $\NP_1$ condition is analogous to requiring that $\mu_G$ is continuous in each variable separately. In classical topology the definition of topological group requires full continuity, while a structure with continuity in each variable separately is called a \emph{semitopological group} \cite[page 27]{TH}.

Classically, the assumption of full continuity is generally more interesting, but we will see in our setting that the more interesting examples of digital topological groups are $\NP_1$ but not $\NP_2$. The full $\NP_2$ continuity assumption is too strong to allow for any nontrivial examples, as we show in Section \ref{cdtg}.

The simplest nontrivial example of an $\NP_1$-digital topological group is a simple closed curve.

\begin{definition}\label{sccdef}
A \emph{digital simple closed curve} is 
a set $X : = \{x_0,x_1, \dots,x_{n-1}\}$ of distinct elements for $n \ge 3$ with some adjacency such that $x_i \adj x_j$ if and only if $j = i\pm 1$, with subscripts read modulo $n$.
\end{definition}

When viewed as a graph, all points of a digital simple closed curve have degree 2; that is, it is a connected $2$-regular graph. 

For example, the set $[-1,1]^2_\Z - \{(0,0)\}$ forms a digital simple closed curve in $(\Z^2,c_1)$. It is not a digital simple closed curve in $(\Z^2,c_2)$ because there are some extra diagonal adjacencies. Specifically the point $(1,0)$ is $c_2$-adjacent to its two $c_1$-neighbors $(1,1)$ and $(1,-1)$, but is also adjacent to $(0,1)$ and $(0,-1)$.

Now we show that a simple closed curve is an $\NP_1$-digital topological group, but not an $\NP_2$-digital topological group.

\begin{exa}\label{sccexa}
Let $G = \{x_0,x_1,\dots,x_{n-1}\}$ be a digital simple closed curve with adjacency $\kappa$, and define operations:
\[ \mu_G(x_i,x_j) = x_{i+j}, \qquad \iota_G(x_i) = x_{-i}, \]
where all subscripts are read modulo $n$.

We will show that these operations make $G$ into an $\NP_1$-digital topological group. The group axioms are clearly satisfied by these operations, so we need only check that $\mu_G$ is $(\NP_1,\kappa)$-continuous, and $\iota_G$ is $(\kappa, \kappa)$-continuous. 

For $\iota_G$, take two $\kappa$-adjacent points $x_i \adj x_{i+1}$. Then 
\[ \iota_G(x_i) = x_{-i} \adj x_{-i-1} = \iota_G(x_{i+1}) \]  
as desired. 

For $\mu_G$, we need to choose any two pairs of points in $G\times G$ which are $\NP_1$-adjacent, and show that they map by $\mu_G$ into $\kappa$-adjacent points of $G$. Points which are $\NP_1$-adjacent are equal in one coordinate, and adjacent in the other. So without loss of generality we will consider the pair $(x_i,x_j) \adj (x_{i+1},x_j)$. Then we have:
\[ \mu_G(x_i,x_j) = x_{i+j} \adj x_{i+j+1} = \mu_G(x_{i+1},x_j) \]
as desired. We have shown that $G$ is an $\NP_1$-digital topological group.

In contrast, these operations do not make $G$ into an $\NP_2$-digital topological group when $n>3$. To see this, note that $(x_0,x_0)$ and $(x_1,x_1)$ are $\NP_2$-adjacent, but their products $\mu_G(x_0,x_0) = x_0$ and $\mu_G(x_1,x_1) = x_2$ are not adjacent.
\end{exa}

\medskip

\subsection{Properties of digital topological groups}

In this subsection, we will investigate some interesting properties of digital topological groups. Most of the results in this section are inspired by corresponding statements in the classical theory. 

\begin{thm}\label{multiso}
Let $G = (G,\kappa)$ be a digital topological group.
Then, for each $x \in G$, the maps 
$\mu_x,\nu_x : G \to G$ 
given by
$\mu_x(y) = xy$ and $\nu_x(y) = yx$, 
respectively,
are $(\kappa,\kappa)$-isomorphisms. 
\end{thm}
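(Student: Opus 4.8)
The plan is to realize each of $\mu_x$ and $\nu_x$ as a composition of $\mu_G$ with a suitable ``slice'' map into $G\times G$, and then to exploit the purely group-theoretic fact that these maps are bijections whose inverses are again left/right multiplications.

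First I would record that $\mu_x$ and $\nu_x$ are set-theoretic bijections: on the underlying group, $\mu_x$ has two-sided inverse $\mu_{x^{-1}}$ (since $x^{-1}(xy)=y=x(x^{-1}y)$), and similarly $\nu_x$ has inverse $\nu_{x^{-1}}$. Thus, to prove that each is a $(\kappa,\kappa)$-isomorphism it suffices to prove that left (resp. right) multiplication by an \emph{arbitrary} element is $(\kappa,\kappa)$-continuous; applying this both to $x$ and to $x^{-1}$ then simultaneously yields continuity of the map and of its inverse.

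For continuity, I would introduce the insertion map $s_x:G\to G\times G$ given by $s_x(y)=(x,y)$. If $y_1\adj y_2$ in $(G,\kappa)$, then $s_x(y_1)=(x,y_1)$ and $s_x(y_2)=(x,y_2)$ agree in the first coordinate and are $\kappa$-adjacent in the second, so they are $\NP_1$-adjacent, and hence also $\NP_2$-adjacent. Therefore $s_x$ is $(\kappa,\NP_i)$-continuous for each $i\in\{1,2\}$. Since $G$ is an $\NP_i$-digital topological group for some $i$, the multiplication $\mu_G$ is $(\NP_i,\kappa)$-continuous, and because any composition of digitally continuous functions is digitally continuous, $\mu_x=\mu_G\circ s_x$ is $(\kappa,\kappa)$-continuous. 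The argument for $\nu_x$ is identical using $t_x(y)=(y,x)$, whose images of adjacent points differ only in the first coordinate and are again $\NP_1$-adjacent.

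There is no real obstacle here; the single point requiring care is that the conclusion must hold whichever of $\NP_1$ or $\NP_2$ governs $G$. This is handled uniformly by the observation that the slice maps $s_x$ and $t_x$ never produce a genuine two-coordinate change: their images of adjacent points always land in the $\NP_1$-adjacency relation, which is contained in $\NP_2$, so composing with $\mu_G$ is valid in either category. Combining continuity of $\mu_x$ (resp. $\nu_x$) with continuity of its inverse $\mu_{x^{-1}}$ (resp. $\nu_{x^{-1}}$) then shows that both maps are $(\kappa,\kappa)$-isomorphisms.
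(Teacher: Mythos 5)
Your proof is correct and follows essentially the same route as the paper: both factor $\mu_x = \mu_G \circ s_x$ through the insertion map $s_x(y)=(x,y)$, observe that $s_x$ is $(\kappa,\NP_i)$-continuous for either $i$ since its images of adjacent points are always $\NP_1$-adjacent, and conclude that $\mu_x$ is an isomorphism because its set-theoretic inverse $\mu_{x^{-1}}$ is continuous by the identical argument. The only difference is cosmetic: you spell out the adjacency check for the slice map, which the paper asserts without detail.
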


\begin{proof}
Since the argument for $\nu_x$ is similar, we will prove the statement for $\mu_x$. Assume that $G$ is an $\NP_i$-digital topological group for some $i\in \{1,2\}$. 

First we show that $\mu_x$ is continuous for each $x \in G$. Let $f_x : G \to G\times G$ be defined by $f_x (y) = (x,y)$ for each $x \in G$. Note that $\mu_x = \mu_G \circ f_x$. Since $f_x$ is $(\kappa, \NP_j(\kappa,\kappa))$-continuous for both $j=1$ and $j=2$, and $\mu_G$ is $(\NP_i(\kappa,\kappa),\kappa)$-continuous, this means that $\mu_x$ will be $(\kappa,\kappa)$-continuous as desired.

By the same reason, $\mu_{x^{-1}}$ is continuous. But $\mu_{x^{-1}}$ is the inverse of $\mu_x$, and so $\mu_x$ is continuous with continuous inverse as desired.
\end{proof}


We can use the above to demonstrate that, in Definition \ref{dtgdefinition}, continuity of the inverse follows automatically from continuity of the product. (This is not generally true in the classical theory, in which it is possible to have a topological space with continuous group product but discontinuous inverse.)

\begin{lem}\label{inversecontinuous}
Let $(G,\kappa)$ be a digital image such that $G$ is a group with multiplication $\mu_G:G\times G \to G$ and inverse $\iota_G: G\to G$. For some $i\in \{1,2\}$, assume that $\mu_G$ is $\NP_i$-continuous for some $i\in \{1,2\}$. Then $G$ is an $\NP_i$-topological group.
\end{lem}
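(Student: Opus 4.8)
The plan is as follows. Since $\mu_G$ is assumed to be $\NP_i$-continuous, the only thing that remains in order to conclude that $G$ is an $\NP_i$-digital topological group is the $(\kappa,\kappa)$-continuity of $\iota_G$. I would prove this by showing directly that $\iota_G$ carries adjacent points to adjacent points, using that left and right translations are graph isomorphisms.

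The key observation I would exploit is that Theorem \ref{multiso}, although stated for a digital topological group, is proved using \emph{only} the continuity of $\mu_G$: the argument factors $\mu_x = \mu_G \circ f_x$ and then identifies $\mu_{x^{-1}}$ as the inverse of $\mu_x$, never appealing to the continuity of $\iota_G$. Consequently, under the hypotheses of this lemma, for every $x\in G$ the translations $\mu_x(y)=xy$ and $\nu_x(y)=yx$ are $(\kappa,\kappa)$-isomorphisms. Because an isomorphism both preserves and reflects adjacency, we obtain in particular that $a\adj b$ implies $xa \adj xb$ and $ax \adj bx$ for every $x\in G$.

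With this in hand the computation is short. Let $x\adj y$ and write $e$ for the identity. Applying the isomorphism $\mu_{x^{-1}}$ (left translation by $x^{-1}$) gives $e = x^{-1}x \adj x^{-1}y$, and then applying $\nu_{y^{-1}}$ (right translation by $y^{-1}$) gives $y^{-1} = e\,y^{-1} \adj x^{-1}y\,y^{-1} = x^{-1}$, that is $\iota_G(x)\adj \iota_G(y)$. Since $\kappa$ is antireflexive we have $x\neq y$, hence $x^{-1}\neq y^{-1}$, so the image points are genuinely adjacent (not merely adjacent-or-equal). Thus $\iota_G$ is $(\kappa,\kappa)$-continuous, and together with the assumed continuity of $\mu_G$ this shows $G$ is an $\NP_i$-digital topological group.

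The one delicate point I would be careful to address explicitly is the apparent circularity in invoking Theorem \ref{multiso}, whose stated hypothesis is that $G$ is already a digital topological group (inverse continuity included). The main obstacle is therefore not the computation but justifying the use of the translation isomorphisms: I would either remark that the proof of Theorem \ref{multiso} never uses $\iota_G$, or re-derive the isomorphism property of $\mu_x$ and $\nu_x$ from $\mu_G$-continuity alone, so that the logical dependence is unambiguous.
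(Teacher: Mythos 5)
Your proposal is correct and follows essentially the same route as the paper: the paper's proof likewise notes that Theorem \ref{multiso} never invokes continuity of $\iota_G$, then applies the isomorphisms $\mu_{x^{-1}}$ and $\nu_{y^{-1}}$ to $x \adj y$ to conclude $y^{-1} \adj x^{-1}$. Your explicit handling of the apparent circularity and of strict (rather than adjacent-or-equal) adjacency only makes the same argument slightly more careful.
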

\begin{proof}
It suffices to show that the inverse is continuous: let $x\adj y$, and we will show that $x^{-1}\adj y^{-1}$.

Note that in Theorem \ref{multiso}, no reference was made to continuity of $\iota_G$. Thus we may use Theorem \ref{multiso} even though we have not assumed continuity of the inverse. 

Since $\mu_{x^{-1}}$ is an isomorphism by Theorem $\ref{multiso}$, we may apply $\mu_{x^{-1}}$ to $x\adj y$ to obtain $e \adj x^{-1}y$, where $e\in G$ is the identity element. Then we apply $\nu_{y^{-1}}$ and obtain $y^{-1}\adj x^{-1}$ as desired.
\end{proof}

We also have a partial converse to Theorem \ref{multiso}: if $\mu_x$ and $\nu_x$ are isomorphisms for every $x$, then $G$ is an $\NP_1$-digital topological group. 
\begin{thm}\label{munuthm}
Let $(G,\kappa)$ be a digital image such that $G$ is a group with multiplication $\mu_G:G\times G \to G$ and inverse $\iota_G: G\to G$. Assume that for each $x\in G$, the functions $\mu_x,\nu_x:G\to G$ are digital isomorphisms. Then $G$ is an $\NP_1$-digital topological group.
\end{thm}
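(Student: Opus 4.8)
The plan is to verify the two defining conditions of an $\NP_1$-digital topological group, namely that $\mu_G$ is $(\NP_1(\kappa,\kappa),\kappa)$-continuous and that $\iota_G$ is $(\kappa,\kappa)$-continuous. The second condition comes for free: once continuity of $\mu_G$ is established, Lemma \ref{inversecontinuous} immediately yields that $G$ is an $\NP_1$-digital topological group. So the entire task reduces to showing that $\mu_G$ sends $\NP_1$-adjacent pairs to adjacent (or equal) points of $G$.

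To do this, I would take two $\NP_1$-adjacent points $(x_1,x_2)$ and $(y_1,y_2)$ in $G\times G$. By definition of $\NP_1$-adjacency these pairs agree in one coordinate and are $\kappa$-adjacent in the other, so there are exactly two cases. In the first, $x_1 = y_1$ and $x_2 \adj y_2$; here I observe that $\mu_G(x_1,x_2) = \mu_{x_1}(x_2)$ and $\mu_G(y_1,y_2) = \mu_{x_1}(y_2)$, and since $\mu_{x_1}$ is a digital isomorphism it is in particular continuous, so $x_2 \adj y_2$ forces $\mu_{x_1}(x_2) \adj \mu_{x_1}(y_2)$. In the second case $x_1 \adj y_1$ and $x_2 = y_2$; symmetrically, the two products are $\nu_{x_2}(x_1)$ and $\nu_{x_2}(y_1)$, and continuity of the isomorphism $\nu_{x_2}$ gives the required adjacency. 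In either case the images are $\kappa$-adjacent, so $\mu_G$ is $\NP_1$-continuous, and the proof concludes by invoking Lemma \ref{inversecontinuous}.

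There is essentially no hard step here: the content is just the observation that $\NP_1$-adjacency is precisely the situation in which one of the single-variable slice maps $\mu_x$ or $\nu_x$ applies, and that the hypothesis hands us exactly the continuity of those slice maps. The only points requiring care are to keep the two coordinate cases straight and to note that we use only the continuity half of the isomorphism hypothesis for this direction. It is worth remarking that this argument would \emph{not} go through for $\NP_2$-continuity, since an $\NP_2$-adjacency can move both coordinates simultaneously and is therefore not realized by a single slice map; this is exactly why the conclusion is $\NP_1$ and the statement is a genuine partial converse to Theorem \ref{multiso} rather than a full equivalence.
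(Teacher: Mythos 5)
Your proof is correct and follows essentially the same route as the paper: reduce to $\NP_1$-continuity of $\mu_G$ via Lemma \ref{inversecontinuous}, then dispatch an $\NP_1$-adjacency by applying the appropriate slice map $\mu_x$ or $\nu_x$. The only difference is cosmetic: the paper handles one coordinate case and dismisses the other as ``without loss of generality,'' while you write out both cases explicitly, which is exactly what justifies the paper's WLOG.
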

\begin{proof}
By Lemma \ref{inversecontinuous}, we need only show that $\mu_G$ is $(\NP_1,\kappa)$-continuous. We must choose two pairs of $\NP_1$-adjacent points in $G$. Without loss of generality choose pairs $(x,y) \adj_{\NP_1} (x,z)$, and we will show that $xy\adj_\kappa xz$. Since $(x,y) \adj_{\NP_1} (x,z)$, we will have $y\adj z$, and applying the isomorphism $\mu_x$ gives $xy \adj_\kappa xz$ as desired.
\end{proof}

An easy corollary will weaken the assumption above:
\begin{cor}\label{onlycontinuous}
Let $(G,\kappa)$ be a digital image such that $G$ is a group with multiplication $\mu_G:G\times G \to G$ and inverse $\iota_G: G\to G$. Assume that for each $x\in G$, the functions $\mu_x,\nu_x:G\to G$ are digitally continuous. Then $G$ is an $\NP_1$-digital topological group.
\end{cor}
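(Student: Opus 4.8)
The plan is to show that the seemingly weaker hypothesis of continuity already forces $\mu_x$ and $\nu_x$ to be digital isomorphisms, so that Theorem \ref{munuthm} applies directly. The essential observation is purely algebraic: for a fixed $x \in G$, the set-theoretic inverse of the map $\mu_x$ is $\mu_{x^{-1}}$, since $\mu_{x^{-1}}(\mu_x(y)) = x^{-1}(xy) = y$ and $\mu_x(\mu_{x^{-1}}(y)) = x(x^{-1}y) = y$ for all $y \in G$. In particular $\mu_x$ is a bijection with inverse $\mu_{x^{-1}}$.

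First I would note that the hypothesis is quantified over \emph{every} element of $G$, so in particular both $\mu_x$ and $\mu_{x^{-1}}$ are digitally continuous. Since $\mu_{x^{-1}}$ is precisely the inverse function of $\mu_x$, this means $\mu_x$ is a continuous bijection whose inverse is also continuous; that is, $\mu_x$ is a digital isomorphism. The identical argument applied to $\nu_x$, whose set-theoretic inverse is $\nu_{x^{-1}}$, shows that each $\nu_x$ is a digital isomorphism as well.

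With these two facts in hand, the hypotheses of Theorem \ref{munuthm} are satisfied: for each $x \in G$ the maps $\mu_x$ and $\nu_x$ are digital isomorphisms. Invoking that theorem immediately yields that $G$ is an $\NP_1$-digital topological group, completing the proof.

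I do not anticipate any genuine obstacle here; the only subtlety worth flagging is the need to use the hypothesis at the element $x^{-1}$ rather than only at $x$, which is exactly what upgrades ``continuous'' to ``continuous with continuous inverse.'' Everything else is a direct appeal to Theorem \ref{munuthm}.
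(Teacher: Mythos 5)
Your proof is correct and follows exactly the paper's own argument: observe that $\mu_{x^{-1}}$ (resp.\ $\nu_{x^{-1}}$) is the set-theoretic inverse of $\mu_x$ (resp.\ $\nu_x$), use the hypothesis at both $x$ and $x^{-1}$ to conclude each is a digital isomorphism, and then invoke Theorem \ref{munuthm}. The only difference is that you spell out the verification of the inverse-function identities, which the paper leaves as ``clearly.''
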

\begin{proof}
By Theorem \ref{munuthm}, we need only show that $\mu_x$ and $\nu_x$ are isomorphisms for every $x$ in $G$. Clearly $\mu_x$ and $\mu_{x^{-1}}$ are inverse functions. Since we have assumed that both are continuous, $\mu_x$ must be an isomorphism. Similarly $\nu_x$ is an isomorphism as desired.
\end{proof}

Since $\mu_x$ is an isomorphism which  carries the identity element $e$ to the element $x$, the structure of $G$ near $e$ must be isomorphic to the structure of $G$ near any other element $x$. 


A graph $G$ is \emph{vertex-transitive} if and only if, for all $x,y\in G$, there is a graph automorphism $f:G\to G$ with $f(x)=y$; see \cite[page 14]{West}, \cite[page 69]{CZ}, and \cite[page 15]{BM}.


\begin{thm}\label{regulargraph}
Let $G$ be a digital topological group. Then $G$ is vertex-transitive.
\end{thm}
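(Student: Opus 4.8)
The plan is to build the required graph automorphism directly out of the translation maps already shown to be isomorphisms in Theorem \ref{multiso}. Given arbitrary $x,y\in G$, I want a single map carrying $x$ to $y$; the natural candidate is left multiplication by the group element $g=yx^{-1}$, namely $\mu_g:G\to G$ with $\mu_g(z)=gz$. First I would verify the pointwise condition: $\mu_g(x)=(yx^{-1})x=y$, using only the group axioms. So $\mu_g$ sends $x$ to $y$ as required, and this holds for any prescribed pair.

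The second ingredient is that $\mu_g$ is a graph automorphism of $G$. By Theorem \ref{multiso}, $\mu_g$ is a $(\kappa,\kappa)$-isomorphism for every choice of $g\in G$. I would then observe that a $(\kappa,\kappa)$-isomorphism from a digital image to itself is exactly the same thing as a graph automorphism of the associated simple graph: it is a bijection of the vertex set, and by the fact recorded just after the definition of isomorphism in Section \ref{pre}, it satisfies $z\adj w$ if and only if $\mu_g(z)\adj \mu_g(w)$, which is precisely the edge-preserving-and-reflecting condition defining a graph automorphism. Hence $\mu_g$ is the desired automorphism with $\mu_g(x)=y$, and vertex-transitivity follows.

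I do not anticipate a genuine obstacle here; the statement is essentially a corollary of Theorem \ref{multiso}. The only conceptual point worth stating carefully is the identification of digital $(\kappa,\kappa)$-self-isomorphisms with graph automorphisms, so that the graph-theoretic definition of vertex-transitivity (quoted from \cite{West,CZ,BM}) is literally satisfied. If I wished to streamline further, I could simply note that the family $\{\mu_g\}_{g\in G}$ already acts transitively on $G$, since for any target $y$ the element $g=yx^{-1}$ realizes $x\mapsto y$, and every member of this family is an automorphism.
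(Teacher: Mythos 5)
Your proposal is correct and is essentially identical to the paper's own proof: both apply Theorem \ref{multiso} to the left-translation map $\mu_{yx^{-1}}$, which carries $x$ to $y$ and is a $(\kappa,\kappa)$-isomorphism, hence a graph automorphism. Your extra remark identifying digital self-isomorphisms with graph automorphisms is a point the paper leaves implicit, but it changes nothing in the substance of the argument.
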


\begin{proof}
Given a digital topological group $G$ and elements $x,y\in G$, by Theorem \ref{multiso} the map $$\mu_{yx^{-1}}:G\to G$$ is a graph automorphism carrying $x$ to $y$ as required. 
\end{proof}

Any vertex-transitive graph is automatically regular (all vertices have the same degree), so we have:
\begin{cor}
Let $G$ be a digital topological group. Then $G$ is regular.
\end{cor}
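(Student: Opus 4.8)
The plan is to deduce regularity directly from vertex-transitivity, which is already established in Theorem \ref{regulargraph}. The only additional ingredient is the standard graph-theoretic fact that every vertex-transitive graph is regular, and I would prove this by showing that any graph automorphism preserves the degree of each vertex.

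First I would fix two arbitrary vertices $x,y\in G$. By Theorem \ref{regulargraph}, $G$ is vertex-transitive, so there is a graph automorphism $f:G\to G$ with $f(x)=y$. Since $f$ is in particular a digital isomorphism from $G$ to itself, the fact noted in Section \ref{pre} applies: $a\adj b$ if and only if $f(a)\adj f(b)$. I would use this equivalence to show that $f$ restricts to a bijection between the set of neighbors of $x$ and the set of neighbors of $y=f(x)$. Indeed, if $a\adj x$ then $f(a)\adj f(x)=y$, so $f$ sends neighbors of $x$ to neighbors of $y$; applying the same reasoning to $f^{-1}$ shows this restriction is onto, and injectivity is inherited from $f$. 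Counting the two neighbor sets then gives $\deg(x)=\deg(y)$.

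Since $x$ and $y$ were arbitrary, all vertices of $G$ have the same degree, which is precisely the assertion that $G$ is regular. There is no genuine obstacle here: the corollary is immediate once vertex-transitivity is in hand, and the only point requiring any care is the routine verification that an automorphism carries the neighborhood of a vertex bijectively onto the neighborhood of its image.
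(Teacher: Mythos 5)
Your proposal is correct and follows exactly the paper's route: the paper derives the corollary from Theorem \ref{regulargraph} by invoking the standard fact that every vertex-transitive graph is regular. The only difference is that you spell out the (routine) proof of that fact---an automorphism carries the neighborhood of $x$ bijectively onto the neighborhood of $f(x)$, so degrees are preserved---whereas the paper simply asserts it.
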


The results above are analogous to the classical statement that a topological group is a homogeneous space \cite[page 39]{MZ}.

We may also ask wether a digital topological group must be \emph{edge-transitive}, that is, there is a graph automorphism carrying any given edge onto any other given edge; see \cite[page 19]{BM} for a classical simple graph.
This is not necessarily true for a digital topological group, as we will show later in Example \ref{edgetransexa}.

The fact that a digital topological group must be regular is a significant restriction on the possible structure of a digital topological group. For example, among finite subsets of $(\Z^2,c_1)$, the only possible connected digital topological group is a digital simple closed curve.
\begin{thm}\label{Z2groups}
Let $G\subset (\Z^2,c_1)$ be a connected digital topological group. Then $G$ is a digital simple closed curve, or a set of 2 or fewer points.
\end{thm}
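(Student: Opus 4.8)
The plan is to use the fact, established in the corollary following Theorem \ref{regulargraph}, that a digital topological group must be $d$-regular for some fixed $d$. Since every point $(a,b)$ of $(\Z^2,c_1)$ has at most four $c_1$-neighbors, namely $(a\pm 1, b)$ and $(a, b\pm 1)$, we know a priori that $d \le 4$. The heart of the argument is to sharpen this to $d \le 2$ by an extremal-point argument; once that is done, the remaining cases reduce to the standard classification of connected regular graphs of small degree.

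First I would bound the degree. Among all points of $G$ choose $p = (a,b)$ with $a$ maximal, and among those with $a$ maximal choose one with $b$ maximal. Then neither $(a+1,b)$ nor $(a,b+1)$ lies in $G$: the former by maximality of the first coordinate $a$, the latter by maximality of the second coordinate among points whose first coordinate equals $a$. Hence the only possible $c_1$-neighbors of $p$ are $(a-1,b)$ and $(a,b-1)$, so the degree of $p$ is at most $2$. Since $G$ is $d$-regular, this forces $d \le 2$.

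I would then finish by a case analysis on $d \in \{0,1,2\}$. If $d = 0$, then $G$ has no edges, and connectedness forces $G$ to be a single point. If $d = 1$, then $G$ is a connected $1$-regular graph, which must be a single edge, so $G$ has exactly $2$ points. If $d = 2$, then $G$ is a connected $2$-regular graph; such a graph is a single cycle, and since a finite simple graph cannot be $2$-regular on fewer than three vertices, this cycle has $n \ge 3$ vertices. A cycle on $n \ge 3$ vertices is, by Definition \ref{sccdef}, precisely a digital simple closed curve. This exhausts all cases and yields the stated conclusion.

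The main obstacle, and really the only genuine idea, is the extremal-point argument that lowers the degree bound from $4$ to $2$. The planar $c_1$-geometry is essential at exactly this step: the analogous statement fails for $c_2$-adjacency or in higher dimensions, where connected regular configurations of degree greater than $2$ do exist. Once $d \le 2$ is secured, the classification is routine, so I expect no further difficulty after establishing the degree bound.
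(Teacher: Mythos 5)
Your proof is correct, and while it shares the paper's overall skeleton---invoke Theorem \ref{regulargraph} to get $d$-regularity with $d\le 4$, then classify by degree---it eliminates the high-degree cases by a genuinely different and more efficient device. The paper treats $d=4$ and $d=3$ separately: for $d=4$ it observes that the only $4$-regular subset of $(\Z^2,c_1)$ is all of $\Z^2$, which is infinite; for $d=3$ it translates $G$ so that $(0,0)\in G$ is extremal in the first coordinate only, and then runs an induction showing $(0,y)\in G$ for every $y>0$, producing an infinite vertical ray and contradicting finiteness. Your lexicographically maximal point is extremal in both coordinates at once, so its degree is at most $2$ immediately; this disposes of $d=3$ and $d=4$ in one stroke, with no induction and no separate argument for $d=4$. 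Both proofs use finiteness of $G$ essentially---the paper through the infinite ray, you through the mere existence of a maximal point---but yours isolates that use more cleanly. The low-degree cases $d\in\{0,1,2\}$ are handled identically in the two proofs, and your observation that a connected finite $2$-regular simple graph is a cycle on $n\ge 3$ vertices, hence a digital simple closed curve in the sense of Definition \ref{sccdef}, matches the paper's reasoning exactly.
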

\begin{proof}
By Theorem \ref{regulargraph}, $G$ must be a $d$-regular graph for some $d$. Since we are using $c_1$-adjacency, the degree $d$ is at most 4. We will consider each possible value for $d\in \{0,1,2,3,4\}$, and conclude that the only allowable situation is when $G$ is a simple closed curve, or a set of 2 or fewer points.

If $d=0$ then $G$ is a single point. If $d=1$, then $G$ is a set of two adjacent points. For $d=2$, any connected $2$-regular graph is a simple cycle, which in our terminology is a simple closed curve. For $d=4$, the only $4$-regular subset of $(\Z^2,c_1)$ is all of $\Z^2$, and since $G$ is finite it cannot be $4$-regular.

It remains to show that the case $d=3$ is also impossible. To obtain a contradiction, assume that $G \subset (\Z^2,c_1)$ is a finite 3-regular graph. Since $G$ is finite we may compose with a translation to assume without loss of generality that $(0,0) \in G$, and that there are no points $(x,y)\in G$ with $x<0$. 

Since $(0,0)\in G$ but no point of $G$ has negative first coordinate, we have $(-1,0)\not \in G$ and thus since $(0,0)$ is degree 3, the neighbors of $(0,0)$ must be exactly $\{(0,1), (1,0), (0,-1)\}$. In particular, we have $(0,1)\in G$. Inductively applying the same reasoning to $(0,1)\in G$, we see that in fact $(0,y)\in G$ for every $y>0$, and so $G$ is infinite which is a contradiction.
\end{proof}

\medskip

\subsection{Examples of digital topological groups}
In this subsection, we present various example constructions of digital topological groups. 

Given any abstract group $G$, there are two trivial ways to realize $G$ as an $\NP_2$-digital topological group. First as a discrete digital image: we may embed $G\subset \Z^n$ as a set having no adjacent points. Then the required continuity conditions for the product and inverse are satisfied automatically, because there are no adjacencies in the domain. 

Second we can form an indiscrete digital image by embedding $G\subset \Z^n$ as a complete graph (by Theorem \ref{embeddingthm} this is possible for $n$ large enough). In this case the required continuity conditions are satisfied automatically because all points of the codomain are adjacent.

Generally we will be interested in examples other than these trivial cases of discrete and indiscrete digital images.

First we show that a direct product of digital topological groups is a digital topological group:
\begin{thm}
For some $i\in \{1,2\}$, let $(G,\kappa)$ and $(H,\lambda)$ be $\NP_i$-digital topological groups. Then $(G\times H, \NP_i(\kappa,\lambda))$ is an $\NP_i$-digital topological group.
\end{thm}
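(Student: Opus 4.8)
The plan is to equip $G\times H$ with the coordinatewise group structure, so that $\mu_{G\times H}((g,h),(g',h')) = (gg',hh')$ and $\iota_{G\times H}(g,h) = (g^{-1},h^{-1})$; the group axioms hold because they hold in $G$ and $H$. By Lemma \ref{inversecontinuous} it then suffices to prove that $\mu_{G\times H}$ is $\NP_i$-continuous, that is, $(\NP_i(\NP_i(\kappa,\lambda),\NP_i(\kappa,\lambda)),\NP_i(\kappa,\lambda))$-continuous; continuity of the inverse is then automatic, so no separate argument for $\iota_{G\times H}$ is needed.

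My approach is to factor $\mu_{G\times H}$ through the individual multiplications $\mu_G$ and $\mu_H$. First I would introduce the \emph{shuffle} bijection
\[ s:(G\times H)\times(G\times H)\to(G\times G)\times(H\times H), \qquad s((g_1,h_1),(g_2,h_2))=((g_1,g_2),(h_1,h_2)), \]
and observe that $\mu_{G\times H}=(\mu_G\times\mu_H)\circ s$. The two facts I would establish are: (a) $s$ is a digital isomorphism from the adjacency $\NP_i(\NP_i(\kappa,\lambda),\NP_i(\kappa,\lambda))$ to the adjacency $\NP_i(\NP_i(\kappa,\kappa),\NP_i(\lambda,\lambda))$; and (b) $\mu_G\times\mu_H$ is continuous from the latter adjacency into $\NP_i(\kappa,\lambda)$. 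Granting (a) and (b), the theorem follows because any composition of continuous maps is continuous.

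For (b) I would prove the general auxiliary lemma that if $f:(X,\kappa_X)\to(Y,\lambda_Y)$ and $g:(Z,\kappa_Z)\to(W,\lambda_W)$ are continuous, then $f\times g$ is $(\NP_i(\kappa_X,\kappa_Z),\NP_i(\lambda_Y,\lambda_W))$-continuous; this is a short check on an $\NP_i$-adjacent pair, using that $f$ and $g$ are continuous in each coordinate separately and that an $\NP_i$-adjacency forces adjacency-or-equality in each coordinate with at most $i$ strict adjacencies. Applying this with $f=\mu_G$ and $g=\mu_H$ yields (b) at once, invoking the hypothesis that $G$ and $H$ are $\NP_i$-digital topological groups.

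The main obstacle is (a), the adjacency bookkeeping for the nested normal products. Writing a point as the quadruple $(g_1,h_1,g_2,h_2)$, I would unwind both nested adjacencies coordinatewise. The key observation is that for these two-fold products the nesting collapses to a relation that ignores the block structure: when $i=1$, two quadruples are $\NP_1(\NP_1,\NP_1)$-adjacent precisely when exactly one of the four coordinates is adjacent and the other three are equal; when $i=2$, they are $\NP_2(\NP_2,\NP_2)$-adjacent precisely when some nonempty set of the four coordinates is adjacent and the rest are equal. Both descriptions are invariant under permuting the four coordinates, and $s$ merely reorders them from $(g_1,h_1,g_2,h_2)$ to $(g_1,g_2,h_1,h_2)$; hence $s$ and $s^{-1}$ both preserve adjacency, establishing (a). The delicate point to verify carefully is how the outer index and inner index combine in each case — in particular, that for $i=2$ the outer $\NP_2$ genuinely allows both blocks to be simultaneously adjacent, which is exactly what makes the flattened relation block-independent and the shuffle an isomorphism.
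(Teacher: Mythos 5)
Your proposal is correct, and it reaches the result by a genuinely different route than the paper. The paper, after the same setup (coordinatewise operations, reduction to continuity of $\mu_{G\times H}$ via Lemma \ref{inversecontinuous}), argues by direct case analysis: writing $\gamma = \NP_i(\kappa,\lambda)$, it takes two $\NP_i(\gamma,\gamma)$-adjacent points, unwinds the adjacency coordinate by coordinate, and checks that the products are adjacent --- using for $i=1$ the fact from Theorem \ref{multiso} that translations $\mu_x$, $\nu_x$ are isomorphisms, and for $i=2$ the $\NP_2$-continuity of $\mu_G$ and $\mu_H$ applied to the regrouped pairs $\bigl((g,g'),(x,x')\bigr)$ and $\bigl((h,h'),(y,y')\bigr)$. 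That regrouping is exactly your shuffle $s$, but the paper never isolates it; you instead make it the centerpiece, factoring $\mu_{G\times H}=(\mu_G\times\mu_H)\circ s$ and proving two reusable lemmas: that a product of continuous maps is $(\NP_i,\NP_i)$-continuous, and that $s$ is an isomorphism between the two nested normal-product adjacencies. Your flattening claims are the crux and they do hold: for $i=1$ both nestings flatten to ``exactly one of the four coordinates strictly adjacent, the rest equal,'' and for $i=2$ both flatten to ``every coordinate adjacent-or-equal, with at least one strict adjacency,'' and both descriptions are invariant under permuting coordinates, so $s$ and $s^{-1}$ preserve adjacency. What your route buys is modularity and uniformity: the two lemmas are general statements about digital images, the cases $i=1,2$ are handled almost identically, and the argument would extend to products of more factors. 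What the paper's route buys is brevity and concreteness: it avoids formalizing the nested-adjacency bookkeeping (the one delicate point in your argument), and for $i=1$ it gets strict adjacency of the products for free from the translation isomorphisms, whereas your product lemma only yields adjacency-or-equality --- which is all that continuity requires, so this is a cosmetic rather than substantive difference.
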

\begin{proof}
Let 
$
\mu_G:G\times G \to G
$
and 
$
\mu_H: H\times H \to H
$
be the multiplication operations for $G$ and $H$, respectively, and let $\iota_G : G \to G$ and $\iota_H : H \to H$ be the inverses for $G$ and $H$, respectively. Then we define the product and inverse 
$$
\mu_{G\times H}:(G\times H)\times(G\times H) \to G\times H
$$ 
and 
$$
\iota_{G\times H}: G\times H \to G\times H
$$
by
\[ 
\mu_{G\times H}((g,h),(g',h')) = (gg',hh'), 
\]
and
\[
\iota_{G\times H}(g,h) = (g^{-1},h^{-1}).
\]
These operations clearly obey the group axioms, so we need only demonstrate the appropriate continuities, which unfortunately are notationally cumbersome. We will write 
$$\gamma = \NP_i(\kappa,\lambda)
$$ 
for the adjacency relation being used in $G\times H$. 

By Lemma \ref{inversecontinuous}, we need only demonstrate that $\mu_{G\times H}$ is $(\NP_i(\gamma,\gamma), \gamma)$-continuous. Assume that $((g,h),(g',h'))$ and $((x,y),(x',y'))$, elements of $(G\times H)\times(G\times H)$, are $\NP_i(\gamma,\gamma)$-adjacent, and we must show that $(gg',hh')$ and $(xx',yy')$ are $\gamma$-adjacent. We will handle the $i=1$ and $i=2$ cases separately.

For $i=1$, we may assume without loss of generality that $(g,h)=(x,y)$ and $(g',h') \adj_\gamma (x',y')$. Further we can assume without loss of generality that $g'=x'$ and $h' \adj_{\lambda} y'$. Then when comparing $(gg',hh')$ with $(xx',yy')$, we see that the first coordinates are equal. In the second coordinate we have $h=y$ and $h' \adj_\lambda y'$, and thus $hh'\adj yy'$ since multiplication by $h=y$ is an isomorphism. Thus $(gg',hh')$ and $(xx',yy')$ are $\NP_1(\gamma,\gamma)$-adjacent as desired.

For $i=2$, we must additionally consider the case where $((g,h),(g',h'))$ and $((x,y),(x',y'))$ are $\gamma$-adjacent in each of the 2 coordinates. In this case we will have $g\adjeq_\kappa x$, $h\adjeq_\lambda y$, $g'\adjeq_\kappa x'$, and $h'\adjeq_\lambda y'$. By $\NP_2(\kappa,\kappa)$-continuity of $\mu_G$ and $\mu_H$ we have $gg' \adjeq xx'$ and $hh'\adjeq yy'$, and so $(gg',hh')$ and $(xx',hh')$ are $\NP_2(\gamma,\gamma)$-adjacent (or equal) as desired.
\end{proof}

The product result above can be used to construct examples of $\NP_1$-digital topological groups other than simple closed curves. For example if $X$ is a simple closed curve of length 4 and $Y$ is a simple closed curve of length 8, then $X\times Y$ is a digital topological group isomorphic to the direct product $\Z_4\times \Z_8$. This digital topological group is analogous to the classical torus $S^1\times S^1$ regarded as a topological group. 

This torus serves as an example digital topological group which is not edge-transitive. Recall that a graph is edge-transitive when, given any two edges, there is a graph automorphism carrying the first to the second. 
\begin{exa}\label{edgetransexa}
Let $G$ be the digital topological group which is a product of two digital simple closed curves of different lengths $k \neq l$. 

Viewing $G$ as a graph resembling the torus $S^1\times S^1$, it has some edges belonging to meridians of length $k$, and some belonging to meridians of length $l$. Because $k\neq l$, no automorphism can carry one of these edges to the other. 
\end{exa}

A large class of examples of $\NP_1$-digital topological groups is the set of Cayley graphs of groups. We will review the basic definitions as follows. 

Given a finitely generated group $G$, a \emph{group presentation} for $G$ is an expression $G = \langle S \mid R \rangle$, where $S$ is a generating set for $G$, and $R$ is a set of relations among the elements of $S$. For convenience we will assume that the generating set $S$ is ``symmetrized'', so that $g\in S$ implies $g^{-1}\in S$.

Given such a group presentation, we form the \emph{Cayley graph} \cite[page 28]{BM} in which the vertex set is $G$, and each element $x\in G$ is connected by an edge to every other element of the form $gx$ for each $g\in S$. The structure of the Cayley graph is determined by both the group $G$ and the set $S$: the same group may give different nonisomorphic Cayley graphs depending on the choice of generating set.

\begin{thm}\label{cayleythm}
Let $X$ be a Cayley graph for some group presentation $G = \langle S \mid R \rangle$. Then $X$ is an $\NP_1$-digital topological group which is isomorphic as a group to $G$.
\end{thm}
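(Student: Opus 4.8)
The plan is to deduce the result from Corollary \ref{onlycontinuous}. First note that the group-isomorphism half of the statement is essentially free: the vertex set of $X$ is $G$ itself, equipped with its own multiplication $\mu_G(x,y)=xy$ and inverse $\iota_G(x)=x^{-1}$, so $X$ is isomorphic as a group to $G$ via the identity map on vertices. What remains is the continuity. By Corollary \ref{onlycontinuous} (equivalently Theorem \ref{munuthm}) it suffices to show that for each fixed $x\in G$ the left translation $\mu_x(y)=xy$ and the right translation $\nu_x(y)=yx$ are digitally continuous. I would therefore check each translation separately against the edge set, recalling that $y\adj y'$ in the Cayley graph means exactly that $y'=gy$ for some $g\in S$.

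The right translations are the easy half, and they always work. Applying $\nu_x$ to $y'=gy$ gives
\[
\nu_x(y')=y'x=gyx=g(yx)=g\,\nu_x(y),
\]
so $\nu_x(y)\adj\nu_x(y')$ directly from the definition of the edges. Hence each $\nu_x$ preserves adjacency, and since $\nu_{x^{-1}}$ is its two-sided inverse, each $\nu_x$ is in fact a digital isomorphism. This is just the familiar fact that right multiplication acts by graph automorphisms on a (left) Cayley graph, and it simultaneously re-proves vertex-transitivity and regularity in the spirit of Theorem \ref{regulargraph}.

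The main obstacle is the continuity of the left translations $\mu_x$, which is where the genuine content lies. Starting again from $y'=gy$ with $g\in S$, I compute
\[
\mu_x(y')=xy'=xgy=(xgx^{-1})(xy)=(xgx^{-1})\,\mu_x(y),
\]
so $\mu_x(y)\adj\mu_x(y')$ holds precisely when $xgx^{-1}\in S$. Thus every $\mu_x$ is continuous exactly when the generating set $S$ is closed under conjugation, i.e.\ a union of conjugacy classes — a condition automatic when $G$ is abelian (as for the simple closed curves of Example \ref{sccexa}, where $G=\Z_n$ and $S=\{\pm 1\}$). I expect this conjugation-invariance of $S$ to be the crux of the argument: once it is in force, each $\mu_x$ is a digital isomorphism with inverse $\mu_{x^{-1}}$, both families of translations satisfy the hypotheses of Corollary \ref{onlycontinuous}, and the theorem follows. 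If no such hypothesis on $S$ is available the left translations can genuinely fail to be continuous (for example in $S_3$ with $S=\{(12),(123),(132)\}$), so securing the conjugation-invariance of $S$ — or building it into the statement — is the step I would treat with the most care.
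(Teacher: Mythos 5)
Your proposal is essentially correct, and the obstruction you identify is real --- but it is a defect of the theorem as stated, not of your approach. The paper's own proof follows the same reduction you use: it takes an $\NP_1$-adjacent pair, reduces (``without loss of generality'') to the case $(x,y)$ versus $(x,gy)$ with $g\in S$, and then concludes $xy \adj xgy$ by ``applying the isomorphisms $\mu_x$ and $\nu_y$'' to $e \adj g$. Exactly as you observe, $\nu_y$ really is always an automorphism of the Cayley graph (your computation $\nu_x(gy)=g\,\nu_x(y)$, with the paper's edge convention $y \adj gy$ for $g\in S$), but $\mu_x$ carries the edge $\{y,gy\}$ to an edge if and only if $xgx^{-1}\in S$; and invoking Theorem \ref{multiso} to assert that $\mu_x$ is an isomorphism is circular, since that theorem presupposes that $X$ is already a digital topological group, which is the very thing being proved. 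So the paper's argument is only valid under your extra hypothesis that $S$ is closed under conjugation; nothing in the paper supplies that hypothesis.

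Moreover, your $S_3$ example is a genuine counterexample, not merely a warning. With $S=\{(12),(123),(132)\}$, $x=(123)$, $g=(12)$, the pairs $(x,e)$ and $(x,g)$ are $\NP_1$-adjacent, yet $\mu_X(x,e)=x$ and $\mu_X(x,g)=xg=(xgx^{-1})x$ are not adjacent, because $xgx^{-1}$ is a transposition other than $(12)$ and hence not in $S$; so the natural multiplication is not $(\NP_1,\kappa)$-continuous. Relabelling cannot repair this: if that graph (the triangular prism) carried any $\NP_1$-digital topological group structure with group $S_3$, then by Theorem \ref{multiso} its adjacency would have to come from a symmetric, conjugation-closed, generating subset of $S_3$ of size $3$, and the only such subset is the class of all three transpositions, whose Cayley graph is $K_{3,3}$ --- bipartite, unlike the prism. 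The same issue in fact afflicts Example \ref{d8example}: $(e,r)$ and $(s,r)$ are $\NP_1$-adjacent in the cube, but $r$ and $sr=r^3s$ are antipodal vertices there; note that $\{r,r^3,s\}$ is not conjugation-closed in $D_8$, whereas the connection set $\{r,r^3\}$ of the disconnected realization in Figure \ref{disconnectedd8fig} is a full conjugacy class, which is why that example does work. In short, Theorem \ref{cayleythm} is correct exactly for Cayley graphs whose generating set is a union of conjugacy classes (automatic when $G$ is abelian, as in Example \ref{sccexa}), and your argument via Corollary \ref{onlycontinuous} is a complete proof of that corrected statement.
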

\begin{proof}
The vertices of $X$ are elements of $G$, so there is a natural multiplication $\mu_X:X\times X\to X$ and inverse $\iota_X:X\to X$ which satisfy the group axioms. Using these operations, $X$ is isomorphic to $G$ as a group. To show that $X$ is an $\NP_1$-digital topological group, by Lemma \ref{inversecontinuous} we must show that $\mu_X$ is continuous using $\NP_1$ in the domain.

Let $(x,y)$ and $(u,v)$ be $\NP_1$-adjacent, and we will show that $xy\adj uv$ in $X$. Without loss of generality, we may assume that $x=u$ and $y\adj v$. Since $X$ is a Cayley graph, $y\adj v$ means that $v = gy$ for some $g \in S$. So we must show that $xy \adj xgy$.

By the construction of the Cayley graph we have $e\adj g$ since $g\in S$. Then applying the isomorphisms $\mu_x$ and $\nu_y$ gives $xy \adj xgy$ as desired.
\end{proof}

As a specific example, it is easy to construct the dihedral group $D_{8}$ on 8 elements as an $\NP_1$-digital topological group in $(\Z^3,c_1)$ modeled on the unit cube. This is notably an example of a non-abelian digital topological group with a nontrivial (neither discrete nor indiscrete) graph structure. 
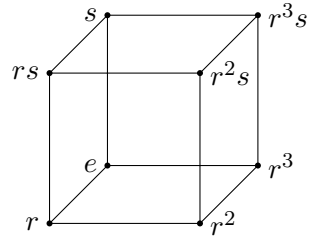
\begin{figure}[h]
\begin{tikzpicture}[scale=2]
\draw (0,0,0) -- (1,0,0) -- (1,1,0) -- (0,1,0) -- (0,0,0) -- (0,0,1) -- (1,0,1) -- (1,1,1) -- (0,1,1) -- (0,0,1);
\draw (1,0,0) -- (1,0,1);
\draw (1,1,0) -- (1,1,1);
\draw (0,1,0) -- (0,1,1);
\node[left] at (0,0,0) {$e$};
\node[left] at (0,0,1) {$r$};
\node[right] at (1,0,1) {$r^2$};
\node[right] at (1,0,0) {$r^3$};
\node[left] at (0,1,0) {$s$};
\node[left] at (0,1,1) {$rs$};
\node[right] at (1,1,1) {$r^2s$};
\node[right] at (1,1,0) {$r^3s$};
\foreach \x in {0,1} {
 \foreach \y in {0,1} {
  \foreach \z in {0,1} {
   \fill (\x,\y,\z) circle (0.02);
   }}}
\end{tikzpicture}
\caption{\label{d8fig} A non-abelian digital topological group modeled on the dihedral group $D_8$.}
\end{figure}

\begin{exa}\label{d8example}
Let $G \subset (\Z^3,c_1)$ be the unit cube $G = [0,1]_\Z^3$. Our definition of the operations is inspired by the Cayley graph of the presentation $D_8 = \langle r, s \mid r^4=e, s^2=e, srs^{-1}=r^{-1} \rangle$. We identify points of $G$ with elements of $D_8$ according to Figure \ref{d8fig}, and define operations 
$$
\mu_G : G \times G \rightarrow G
$$ 
and 
$$
\iota_G : G \rightarrow G
$$ 
by the structure of $D_8$; see Tables \ref{LStable10} and \ref{LStable11}.

\renewcommand{\tabcolsep}{25.48pt}
\renewcommand{\arraystretch}{1.50}
\begin{table}[h!]
  \begin{center}
\begin{tabular}{c| c c c c c}
   \hline
      {$\mu_G$}  & $e$   & $r$   & $r^2$ & $r^3$  & $s$ \\	    \hline
      $e$        & $e$   & $r$   & $r^2$ & $r^3$  & $s$ \\
      $r$        & $r$   & $r^2$ & $r^3$ & $e$    & $rs$   \\
      $r^2$      & $r^2$ & $r^3$ & $e$   & $r$    & $r^2s$   \\
      $r^3$      & $r^3$ & $e$   & $r$   & $r^2$  & $r^3s$   \\
      $s$        & $s$   & $r^3s$  & $r^2s$ & $rs$  & $e$ \\
   \hline
\end{tabular}
\vspace*{4mm}
\end{center}
 \caption{The multiplication $\mu_G : G \times G \rightarrow G$.} \label{LStable10}
\end{table}

\renewcommand{\tabcolsep}{16.48pt}
\renewcommand{\arraystretch}{1.50}
\begin{table}[h!]
  \begin{center}
\begin{tabular}{c| c c c c c c c c}
   \hline
      {}          & $e$   & $r$     & $r^2$   & $r^3$ & $s$ & $rs$   & $r^2s$ & $r^3s$ \\	    \hline
      {$\iota_G$} & $e$   & $r^3$   & $r^2$   & $r$ & $s$ & $sr^3$ & $sr^2$ & $sr$ \\
     
   \hline
\end{tabular}
\vspace*{4mm}
\end{center}
 \caption{The inverse operation $\iota_G : G \rightarrow G$.} \label{LStable11}
\end{table}
\end{exa}

A typical Cayley graph will not be an $\NP_2$-digital topological group. The assumption below will typically be satisfied by any Cayley graph, as long as there is some element of order greater than 2.
\begin{thm}
Let $X$ be a Cayley graph for some group presentation $G = \langle S \mid R \rangle$ having some generator $x\in S$ such that $x^2\not\in S$ and $x^2 \neq e$. Then $X$ is not an $\NP_2$-digital topological group.
\end{thm}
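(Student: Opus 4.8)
The plan is to exhibit a single pair of $\NP_2$-adjacent points in $X\times X$ whose images under $\mu_X$ are neither adjacent nor equal, directly contradicting $(\NP_2(\kappa,\kappa),\kappa)$-continuity. This follows exactly the strategy used in Example \ref{sccexa} to show that a simple closed curve fails to be $\NP_2$, and the two hypotheses $x^2\notin S$ and $x^2\neq e$ will turn out to correspond precisely to the two ways the failure could be avoided.

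First I would recall the edge rule for the Cayley graph: for $u,v\in X$ we have $u\adj v$ if and only if $v=gu$ for some $g\in S$. In particular, since $x\in S$ and $x=x\cdot e$, the identity $e$ is adjacent to $x$, that is, $e\adj x$. Next I would form the candidate pair $(e,e)$ and $(x,x)$ in $X\times X$. Because $e\adj x$ holds in each coordinate, these two points are adjacent in both coordinates and hence $\NP_2$-adjacent. Their images are $\mu_X(e,e)=e$ and $\mu_X(x,x)=x^2$, so $(\NP_2,\kappa)$-continuity would force $e\adjeq x^2$.

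Finally I would rule out both possibilities for $e\adjeq x^2$ using the hypotheses. The case $e=x^2$ is excluded immediately, since $x^2\neq e$. For the case $e\adj x^2$: by the edge rule this would mean $x^2=g\cdot e=g$ for some $g\in S$, i.e.\ $x^2\in S$, contradicting $x^2\notin S$. Hence $e$ and $x^2$ are neither adjacent nor equal, so $\mu_X$ is not $(\NP_2,\kappa)$-continuous and $X$ is not an $\NP_2$-digital topological group.

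The argument is short and presents no real obstacle; the only point requiring care is verifying that the two stated hypotheses match exactly the two failure modes $e=x^2$ and $e\adj x^2$ for the witness pair. One could equally well take the pair $(e,x)$ and $(x,x^2)$ or any left/right translate of $(e,e),(x,x)$, but the pair at the identity is the cleanest, since there the products $e$ and $x^2$ are read off directly from the group structure.
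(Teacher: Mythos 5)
Your proof is correct and follows essentially the same approach as the paper's: both exhibit the witness pair $(e,e)$ and $(x,x)$, which is $\NP_2$-adjacent since $e\adj x$, and then use the hypotheses $x^2\neq e$ and $x^2\notin S$ to rule out the two ways the products $e$ and $x^2$ could be equal or adjacent. The only difference is that you spell out the Cayley-graph edge rule explicitly when excluding $e\adj x^2$, a detail the paper leaves implicit.
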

\begin{proof}
Since $x\in S$ we will have $e\adj x$, and so $(e,e)$ and $(x,x)$ are $\NP_2$-adjacent. But $\mu_X(e,e) = e$ and $\mu_X(x,x)=x^2$ are not equal or adjacent because $x^2\not\in S$ and $x^2 \neq e$ was assumed. Thus $\mu_X$ is not an $\NP_2$-continuous function.
\end{proof}

It is natural to consider the converse question to Theorem \ref{cayleythm}: if $G$ is an $\NP_1$-digital topological group, then is $G$ a Cayley graph for some group isomorphic to $G$?
The answer is no, as the following example demonstrates:

\begin{exa}
Let $G \subset \Z^2$ be the unit square of four points:
\[ x_0 = (0,0), x_1=(1,0), x_2 = (1,1),x_3 = (0,1), \]
and define operations 
\[
\mu_G(x_i,x_j) = x_{i+j}
\] 
and 
\[
\iota_G(x_i) = x_{-i}
\] 
with subscripts read modulo $4$, so that $G$ is isomorphic as a group to the cyclic group $\Z_4$ of order 4.

The operation above makes $(G,c_2)$ into a digital topological group, where the continuity of $\mu_G$ and $\iota_G$ is automatic because $(G,c_2)$ is a complete graph. But $G$ is not the Cayley graph for $\Z_4$, which would be a simple cycle graph of 4 points. 
\end{exa}


\subsection{Connected components of a digital topological group}

A basic result from the classical theory of topological groups is that the component $G_e$ of the identity element $e$ of a topological group $G$ is a normal subgroup; see \cite[page 39]{MZ}.
The same is true in the digital setting; see Theorem \ref{normal} below.


\begin{definition}
Let $G$ be a digital topological group with identity element $e\in G$. Define $G_e$ to be the connected component containing the identity element $e$, which we call the \emph{identity component}.

More generally, for some $x\in G$, let $G_x$ be the component containing $x$.
\end{definition}

\begin{lem}\label{mucomponents}
Let $G$ be a digital topological group. For $x,y\in G$, we have $\mu_x(G_y) = G_{xy}$ and $\nu_x(G_y) = G_{yx}$.
\end{lem}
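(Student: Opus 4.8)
The plan is to reduce the statement to the general principle that a digital isomorphism of $G$ with itself carries connected components onto connected components. By Theorem \ref{multiso}, for each $x\in G$ the left-translation $\mu_x:G\to G$ is a $(\kappa,\kappa)$-isomorphism, and likewise $\nu_x$ is an isomorphism. Since the argument for $\nu_x$ is word-for-word identical to that for $\mu_x$ (just replacing left multiplication by right multiplication), I would prove only the first equality $\mu_x(G_y)=G_{xy}$ and remark that the second follows by the same reasoning.

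First I would establish the inclusion $\mu_x(G_y)\subseteq G_{xy}$. Because $\mu_x$ is continuous and $G_y$ is a connected set, the image $\mu_x(G_y)$ is a connected subset of $G$; since it contains $\mu_x(y)=xy$, it must lie inside the connected component of $xy$, which is exactly $G_{xy}$.

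For the reverse inclusion $G_{xy}\subseteq\mu_x(G_y)$, I would apply the same reasoning to the inverse map. Note that $\mu_x^{-1}=\mu_{x^{-1}}$, which is again continuous by Theorem \ref{multiso}. Since $G_{xy}$ is connected and contains $xy$, its image $\mu_{x^{-1}}(G_{xy})$ is a connected set containing $\mu_{x^{-1}}(xy)=x^{-1}(xy)=y$, hence $\mu_{x^{-1}}(G_{xy})\subseteq G_y$. Applying the bijection $\mu_x$ to both sides gives $G_{xy}\subseteq\mu_x(G_y)$. Combining the two inclusions yields the desired equality.

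The only delicate point is that continuity alone supplies the forward inclusion but not equality; obtaining equality genuinely requires the full isomorphism property, that is, the continuity of the inverse translation $\mu_{x^{-1}}$. This is precisely what Theorem \ref{multiso} guarantees, so I expect no real obstacle beyond being careful to invoke both directions of the isomorphism rather than mere continuity.
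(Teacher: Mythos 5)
Your proposal is correct and follows essentially the same argument as the paper: the forward inclusion via continuity of $\mu_x$ and connectedness of $G_y$, and the reverse inclusion by applying the same reasoning to $\mu_{x^{-1}}$ and then pushing forward by $\mu_x$. The paper likewise treats only $\mu_x$ and dismisses $\nu_x$ as similar, so there is nothing to add.
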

\begin{proof}
Since the statement for $\nu_x$ is similar, we will prove the statement for $\mu_x$.  
Note that 
$$
\mu_x(y) = xy \in G_{xy}.
$$
Thus $\mu_x$ maps $y$ into $G_{xy}$, and since the continuous image of a digital connected subset is connected, we must have $$\mu_x(G_y) \subseteq G_{xy}.$$ To see that these sets are equal, note that $\mu_{x^{-1}}$ maps $xy$ into $G_y$, and so by the same reasons as above we have $$\mu_{x^{-1}}(G_{xy}) \subseteq G_y.$$ Applying $\mu_x$ gives $G_{xy} \subseteq \mu_x(G_y)$, which combines with the above to give $$\mu_x(G_y) = G_{xy}$$
as required.\end{proof}

Since $\mu_x$ is a digital isomorphism, the above gives:
\begin{cor}\label{componentsisom}
Let $G$ be a digital topological group. For $x,y\in G$, the sets $G_x$ and $G_y$ are digitally isomorphic.
\end{cor}

The above means that any digital topological group $G$ naturally decomposes as a disjoint union of connected components, each isomorphic to $G_e$. Specifically, any component $G_x$ is digitally isomorphic to $G_e$ by the isomorphism $\mu_x:G_e \to G_x$. 

\begin{thm}\label{normal}
Let $G$ be a digital topological group. Then $G_e$ is a normal subgroup of $G$.
\end{thm}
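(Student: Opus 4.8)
The plan is to verify the two independent requirements separately: that $G_e$ is a subgroup of $G$ in the purely algebraic sense, and that it is normal. The digital structure enters only as a tool to prove these algebraic facts, through the interaction between connected components and the translation maps supplied by Lemma \ref{mucomponents} and Theorem \ref{multiso}.

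First I would show $G_e$ is a subgroup. The identity $e$ lies in $G_e$ by definition. For closure under multiplication, observe that any $a \in G_e$ satisfies $G_a = G_e$, since connected components partition $G$ and $a$ shares a path with $e$. Applying Lemma \ref{mucomponents} then gives $\mu_a(G_e) = G_{ae} = G_a = G_e$, so that $ab = \mu_a(b) \in G_e$ whenever $b \in G_e$. For closure under inverse I would argue separately, since the inverse is not a translation and so is not covered by Lemma \ref{mucomponents}: the map $\iota_G$ is continuous and fixes $e$, so $\iota_G(G_e)$ is a connected set containing $e$, whence $\iota_G(G_e) \subseteq G_e$ and $a^{-1} \in G_e$ for every $a \in G_e$.

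Next I would establish normality, that is, $x G_e x^{-1} \subseteq G_e$ for every $x \in G$. I would realize conjugation by $x$ as the composition $\nu_{x^{-1}} \circ \mu_x$, which sends $y$ to $x y x^{-1}$. Two applications of Lemma \ref{mucomponents} give $\mu_x(G_e) = G_x$ and then $\nu_{x^{-1}}(G_x) = G_{x x^{-1}} = G_e$, so the composite carries $G_e$ onto $G_e$. Hence $x a x^{-1} \in G_e$ for all $a \in G_e$, which is exactly the normality condition.

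I expect the proof to be largely routine once Lemma \ref{mucomponents} is in hand; the only step requiring a genuinely different idea is closure under inverse, which must be handled by the connectedness-plus-continuity argument above rather than by the translation lemma. Everything else reduces to bookkeeping with the identity $G_a = G_e$ for $a \in G_e$ together with the two component formulas, so the main content of the argument lies in assembling these pieces rather than in any single hard computation.
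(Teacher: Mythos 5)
Your proof is correct, and it diverges from the paper's only in how the subgroup half is packaged. The normality half is exactly the paper's argument: both realize conjugation as $\nu_{x^{-1}}\circ\mu_x$ and apply Lemma \ref{mucomponents} twice to get $\mu_x(G_e)=G_x$ and $\nu_{x^{-1}}(G_x)=G_e$. For the subgroup claim, the paper uses the one-step criterion: given $x,y\in G_e$ it chooses a path from $x$ to $y$ and transports it by the isomorphism $\mu_{x^{-1}}$ to a path from $e$ to $x^{-1}y$, obtaining closure under product and inverse simultaneously. You instead verify the axioms separately: products via Lemma \ref{mucomponents} together with the partition fact $G_a=G_e$, and inverses via continuity of $\iota_G$ plus the observation that a connected set containing $e$ lies inside $G_e$. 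Both routes rest on the same machinery and yours is equally valid; the paper's version is marginally leaner in that it never needs to invoke continuity of $\iota_G$ at all. One small correction to your closing remark: closure under inverse does \emph{not} require a genuinely different idea, since Lemma \ref{mucomponents} covers it too --- for $a\in G_e$ one has $a^{-1}=\mu_{a^{-1}}(e)$ with $e\in G_a$, so $a^{-1}\in\mu_{a^{-1}}(G_a)=G_{a^{-1}a}=G_e$. So the translation lemma alone suffices for the whole subgroup verification; your connectedness argument is a fine alternative, not a forced detour.
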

\begin{proof}
To show that $G_e$ is a subgroup, take $x,y\in G_e$, and we will show that $x^{-1}y \in G_e$. Since $x$ and $y$ are in the same component, we can construct a path from $x$ to $y$ in $G_e$. Applying the isomorphism $\mu_{x^{-1}}$ to this path gives a path from $e$ to $x^{-1}y$, which means that $x^{-1}y\in G_e$ as desired. 

To show that $G_e$ is normal, take $x\in G$ and $y\in G_e$, and we must show that $xyx^{-1}\in G_e$. Since $y\in G_e$, by Lemma \ref{mucomponents} we have $\mu_x(y) \in G_x$, and thus $xyx^{-1} = \nu_{x^{-1}}(\mu_x(y)) \in G_e$ as desired.
\end{proof}

Because $G_e$ is a connected subgroup and the whole group $G$ consists of disjoint copies of $G_e$, our task of classifying digital topological groups will focus on connected groups.

Let $C_G$ be the set of all (connected) components of a digital topological group $G$. 
Suggested by Lemma \ref{mucomponents}, there is a natural group structure on the set $C_G$ of components of $G$ as follows. 

\begin{thm}
Let $G$ be a digital topological group. Then $C_G$ is a group, with operations given by $G_x\cdot G_y = G_{xy}$ and $(G_x)^{-1} = G_{x^{-1}}$, where $x,y \in G$.
\end{thm}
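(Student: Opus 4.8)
The plan is to recognize that, since the operations on $C_G$ are defined via representatives, the only real content is \emph{well-definedness}; once the product and inverse are shown to be independent of the chosen representatives, every group axiom will transfer mechanically from the group structure of $G$. So I would organize the argument into a well-definedness part (the substantive one) followed by a routine verification of the axioms.

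The heart of the matter is to show that if $G_x = G_{x'}$ and $G_y = G_{y'}$, then $G_{xy} = G_{x'y'}$. I would obtain this directly from Lemma \ref{mucomponents} by a short chain of equalities of components. Since $G_y = G_{y'}$, applying the $\mu_x$ part of the lemma gives $G_{xy} = \mu_x(G_y) = \mu_x(G_{y'}) = G_{xy'}$; since $G_x = G_{x'}$, applying the $\nu_{y'}$ part gives $G_{xy'} = \nu_{y'}(G_x) = \nu_{y'}(G_{x'}) = G_{x'y'}$. Concatenating yields $G_{xy} = G_{x'y'}$. (Equivalently, and perhaps more transparently for the reader, one can argue pathwise: a path from $y$ to $y'$ maps under the continuous $\mu_x$ to a path from $xy$ to $xy'$, and a path from $x$ to $x'$ maps under the continuous $\nu_{y'}$ to a path from $xy'$ to $x'y'$, so $xy$ and $x'y'$ lie in a common component.) For the inverse, well-definedness follows from continuity of $\iota_G$: if $x,x'$ lie in the same component, a path joining them maps under $\iota_G$ to a path from $x^{-1}$ to $(x')^{-1}$, so $G_{x^{-1}} = G_{(x')^{-1}}$.

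With well-definedness in hand, the group axioms are immediate from the defining formulas together with the corresponding identities in $G$: associativity reads $(G_x\cdot G_y)\cdot G_z = G_{(xy)z} = G_{x(yz)} = G_x\cdot(G_y\cdot G_z)$; the component $G_e$ is a two-sided identity since $G_e\cdot G_x = G_{ex} = G_x = G_{xe} = G_x\cdot G_e$; and $G_{x^{-1}}$ is a two-sided inverse of $G_x$ since $G_x\cdot G_{x^{-1}} = G_{xx^{-1}} = G_e = G_{x^{-1}x} = G_{x^{-1}}\cdot G_x$. I would present these in one or two lines each, as they require no new ideas.

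I expect the well-definedness of the product to be the only genuine obstacle, and the reliance on Lemma \ref{mucomponents} makes even that quite short; the continuity of $\mu_x$, $\nu_x$ (Theorem \ref{multiso}) and of $\iota_G$ (built into the definition, cf.\ Lemma \ref{inversecontinuous}) is exactly what guarantees that components are permuted in a representative-independent way. A minor point worth stating explicitly is that this construction is really the quotient of $G$ by the normal subgroup $G_e$ (Theorem \ref{normal}), so $C_G \cong G/G_e$; noting this both confirms the group structure and situates the result, though the self-contained verification above does not logically require it.
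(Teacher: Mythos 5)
Your proposal is correct, and its key step takes a genuinely different route from the paper's. For well-definedness of the product, the paper works directly with $\NP_1$-continuity of $\mu_G$: it takes a path from $x$ to $x'$ and a path from $y$ to $y'$, pads them to equal length so that consecutive entries repeat alternately in even and odd positions (so that consecutive pairs are $\NP_1$-adjacent, never moving both coordinates at once), and then multiplies the paths pointwise to get a path from $xy$ to $x'y'$. Your argument replaces this interleaving construction with a two-stage application of Lemma \ref{mucomponents}: first $G_{xy} = \mu_x(G_y) = \mu_x(G_{y'}) = G_{xy'}$, then $G_{xy'} = \nu_{y'}(G_x) = \nu_{y'}(G_{x'}) = G_{x'y'}$. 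This is shorter, avoids the slightly fiddly path-padding bookkeeping, and makes transparent that only continuity of the one-variable translations $\mu_x$, $\nu_x$ is needed — which is exactly what Theorem \ref{multiso} supplies for any digital topological group, $\NP_1$ or $\NP_2$. The paper's approach, by contrast, exhibits the standard ``move one coordinate at a time'' technique for producing $\NP_1$-connected paths in a product, which is a reusable tool but is overkill here given that Lemma \ref{mucomponents} is already available. Your treatment of the inverse (pushing a path forward by the continuous $\iota_G$) coincides with the paper's, as does the observation that the group axioms transfer mechanically once well-definedness is established; your closing remark that $C_G \cong G/G_e$ is precisely the content of the paper's subsequent Theorem \ref{LSnonsplit}, so it is consistent with, though not needed for, the present statement.
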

\begin{proof}
By the group properties of $G$, these operations on $C_G$ will satisfy the group axioms. It suffices only to show that these operations are well-defined.

To show that the inverse is well defined, take $x,x'$ in $G$ with $G_x = G_{x'}$, and we will show that $G_{x^{-1}} = G_{x'^{-1}}$. Since $G_x = G_{x'}$, there is a path from $x$ to $x'$ in $G$. Since the inverse map
$$
\iota_G : G \rightarrow G
$$ 
is continuous, we may apply $\iota_G$ to this path to obtain a path from $x^{-1}$ to $x'^{-1}$, which shows that $x^{-1}$ and $x'^{-1}$ are in the same component, and thus that $G_{x^{-1}} = G_{x'^{-1}}$ as desired.

Now we show that the product $\cdot$ is well-defined. To do this, take $x,y,x',y' \in G$ with $G_x = G_{x'}$ and $G_y = G_{y'}$, and we must show that $G_{xy} = G_{x'y'}$. That is, we must show that: when $x$ and $x'$ are in the same component, and when $y$ and $y'$ are in the same component, then $xy$ and $x'y'$ are in the same component.

Since $x$ and $x'$ are in the same component, there is a path $$x=x_0,x_1,\dots,x_n=x'$$ from $x$ to $x'$ in $G$. Similarly there is a path $$y=y_0,y_1,\dots,y_m=y'$$ from $y$ to $y'$ in $G$. By perhaps repeating elements several times to lengthen the paths, we may assume that $m=n$ so that these paths have the same length. We will also assume that points are repeated in these paths in the following way: 
\begin{itemize}
\item for even $i$ we assume that $x_i=x_{i+1}$, and 
\item or odd $i$ we assume that $y_i = y_{i+1}$. 
\end{itemize}

Now we may multiply these paths pointwise to obtain:
\[ xy = x_0y_0, x_1y_1, \dots, x_ny_n=x'y' \]
and it suffices to show that this sequence of points forms a path from $xy$ to $x'y'$. We must demonstrate that $$x_iy_i \adjeq x_{i+1}y_{i+1}$$ for each $i<n$. Because of the structure of repeated values in even and odd positions, we see that $(x_i,y_i)$ and $(x_{i+1},y_{i+1})$ are $\NP_1$-adjacent for any $i<n$. Thus their products $x_iy_i$ and $x_{i+1}y_{i+1}$ are adjacent as desired.
\end{proof}

\begin{thm}\label{LSnonsplit}
Let $G$ be a digital topological group with identity element $e$. Then there is a group isomorphism $G / G_e \cong C_G$.
\end{thm}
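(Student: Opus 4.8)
The plan is to realize the isomorphism as the standard consequence of the first isomorphism theorem from group theory, applied to the obvious ``component'' map. First I would define a function
$$
\phi : G \to C_G, \qquad \phi(x) = G_x,
$$
sending each element to its connected component. By Theorem \ref{normal} we already know $G_e$ is a normal subgroup, so the quotient $G/G_e$ makes sense, and by the preceding theorem $C_G$ carries a well-defined group structure with $G_x \cdot G_y = G_{xy}$. The whole argument then reduces to checking that $\phi$ is a surjective group homomorphism with kernel exactly $G_e$.

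The homomorphism property is immediate from the definition of multiplication on $C_G$: for $x,y\in G$ we have
$$
\phi(xy) = G_{xy} = G_x \cdot G_y = \phi(x)\,\phi(y),
$$
where the middle equality is precisely the operation established on $C_G$. Surjectivity is equally direct, since every element of $C_G$ is by definition a component $G_x$ for some $x\in G$, and this is $\phi(x)$. So $\phi$ is an epimorphism onto $C_G$.

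It then remains to identify the kernel. The identity element of $C_G$ is $G_e$, so $\phi(x)$ is trivial precisely when $G_x = G_e$, which holds if and only if $x$ lies in the component of the identity, i.e.\ if and only if $x\in G_e$. Hence $\ker \phi = G_e$. Applying the first isomorphism theorem to the surjection $\phi$ yields
$$
G/G_e = G/\ker\phi \;\cong\; \operatorname{Im}\phi = C_G,
$$
as claimed. I do not expect a genuine obstacle here: the entire content has been front-loaded into the earlier results, namely that $G_e$ is normal (Theorem \ref{normal}) and that the operations on $C_G$ are well defined. The only point requiring any care is to confirm that the set-theoretic fibres of $\phi$ really are the cosets of $G_e$, which is exactly the computation of the kernel above; once that is in hand the conclusion is a purely formal invocation of the classical isomorphism theorem.
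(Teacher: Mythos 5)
Your proof is correct and takes essentially the same approach as the paper: the paper defines the very same correspondence directly on the quotient, $f(\bar x) = G_x$, and verifies well-definedness, the homomorphism property, trivial kernel, and surjectivity by hand, which is exactly your argument with the first isomorphism theorem unwound. The only (cosmetic) difference is that you define the component map on $G$ and quotient by its kernel, whereas the paper starts on $G/G_e$ and checks well-definedness using Lemma \ref{mucomponents}; both proofs rest on the previously established fact that the operations on $C_G$ are well defined.
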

\begin{proof}
For $x\in G$, let $\bar x\in G/G_e$ be the representative of $x$ in the quotient. Let 
$$
f:G/G_e \to C_G
$$ 
be given by 
$$
f(\bar x) = G_{x}.
$$
We will show that $f$ is an isomorphism.

First we show that $f$ is well-defined. Let $\bar x = \bar y$, and we must show that $G_{x} = G_{y}$, that is, $x$ and $y$ are in the same component. The fact that $\bar x=\bar y$ means that $ x = g y$ for some $g\in G_e$. Since $g\in G_e$ and $y\in G_y$, we have $gy \in G_y$ by Lemma \ref{mucomponents}. Thus $x \in G_y$, and so $x$ and $y$ are in the same component as desired.

Clearly $f$ is a group homomorphism because $$f(\bar x\bar y) = G_{xy} = G_x \cdot G_y = f(\bar x)f(\bar y)$$ and $$f(\bar x^{-1}) = G_{x^{-1}} = (G_x)^{-1} = f(\bar x)^{-1}.$$ 

Finally we show that $f$ is a bijection. The kernel of $f$ is any $\bar x$ with $G_x = G_e$, that is to say $\bar x \in \ker f$ if and only if $x \in G_e$, which holds only when $x$ is trivial in $G/G_e$. Thus $f$ has trivial kernel. Finally $f$ is surjective because for any element $G_x\in C_G$, we have 
$$
f(\bar x) = G_x
$$
as required.
\end{proof}

Note that Theorem \ref{LSnonsplit} asserts that there is a short exact sequence of groups
\begin{eqnarray} \label{LSsplit}
\xymatrix@C=15mm @R=10mm{
0  \rightarrow G_e  \hookrightarrow G   \twoheadrightarrow   C_G  \rightarrow 0
}
\end{eqnarray}
that is not necessarily split. If the sequence (\ref{LSsplit}) happens to be split, then $G$ is a semidirect product of $C_G$ and $G_e$; that is,
$$
G \cong G_e \rtimes C_G.
$$

In particular, it is not always true that $G$ is algebraically the direct product $G \cong G_e \times C_G$, as the following example shows:

\begin{exa}
Let $G$ be the dihedral group $D_8$ (see Example \ref{d8example}), this time modeled on the set 
\[ G = (\{0,1\}\times \{0,1\}) \cup (\{3,4\}\times \{0,1\}) \subset (\Z^2,c_1) \]
as shown in Figure \ref{disconnectedd8fig}. 
\begin{figure}[h]
\[ 
\begin{tikzpicture}[scale=2]
\draw (0,0) grid (1,1);
\draw (3,0) grid (4,1);

\node[left] at (0,0) {$e$};
\node[right] at (1,0) {$r$};
\node[right] at (1,1) {$r^2$};
\node[left] at (0,1) {$r^3$};
\node[left] at (3,0) {$s$};
\node[right] at (4,0) {$rs$};
\node[right] at (4,1) {$r^2s$};
\node[left] at (3,1) {$r^3s$};

\foreach \x in {0,1,3,4} {
 \foreach \y in {0,1} {
   \fill (\x,\y) circle (0.02);
   }}
\end{tikzpicture}
\]
\caption{Another realization of $D_8$ as an $\NP_1$-digital topological group.\label{disconnectedd8fig}}
\end{figure}
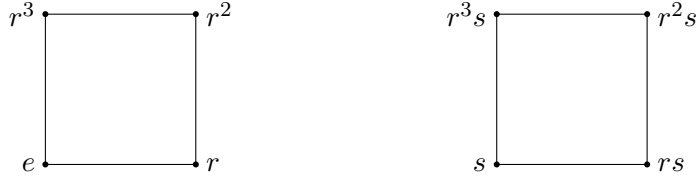
We must show that the product operation used in Example \ref{d8example} is continuous with respect to the adjacencies in the figure. 

An arbitrary point of $G$ has the form $r^ks^\epsilon$, where $k\in \{0,1,2,3\}$ and $\epsilon \in \{0,1\}$. To show that the product is continuous, we choose two pairs of $\NP_1$-adjacent points. Without loss of generality, we take the pairs $(r^ks^\epsilon, r^ls^\delta)$ and $(r^ks^\epsilon, r^{l+1}s^\delta)$ where $k,l\in \{0,1,2,3\}$ and $\epsilon,\delta \in \{0,1\}$, where exponents of $r$ are always read modulo 4, and exponents of $s$ are always read modulo 2. We must show that the products $r^ks^\epsilon r^ls^\delta$ and $r^ks^\epsilon r^{l+1}s^\delta$ are adjacent in $G$. 

When $\epsilon=0$, we have
\[ r^ks^\epsilon r^ls^\delta = r^{k+l}s^{\delta} \adj r^{k+l+1}s^{\delta} = r^k s^\epsilon r^{l+1} s^\delta \]
as desired. Recall that in $D_8$ we have $srs^{-1} = r^{-1}$, which means that $sr = r^{-1}s$ and more generally that $sr^k = r^{-k}s$.
Thus when $\epsilon = 1$, we have
\[ r^ks^\epsilon r^l s^\delta = r^k r^{-l} s s^\delta \adj r^k r^{-l-1} s s^{\delta} = r^k s^\epsilon r^{l+1}s^\delta \]
as desired.

Thus $G$ is an $\NP_1$-digital topological group. Observe that $G_e$ is isomorphic as a group to $\Z_4$, while $C_G$ is isomorphic to the group $\Z_2$, and so $G$ is not the direct product of $G_e$ and $C_G$. Rather, it is the semidirect product $D_8 \cong \Z_4 \rtimes \Z_2$, where the $\Z_2$ factor acts by inversion.

This example also demonstrates that digital topological groups which are isomorphic as groups may not be isomorphic as digital images. The present example and Example \ref{d8example} are algebraically the same group, but realized as two different digital images.
\end{exa}

\bigskip

\section{Digital topological group homomorphisms} \label{dtgh}

In this section, we consider morphisms between digital topological groups, and the first isomorphism theorem for digital topological groups. 
We define a morphism between digital topological groups as follows.

\begin{definition}
Let $G = (G, \kappa)$ and $H = (H, \lambda)$ be digital topological groups. A map $f : G \rightarrow H$ is called a {\it digital topological group homomorphism} if it is both a $(\kappa,\lambda)$-continuous function and a group homomorphism.
\end{definition}


\begin{definition}
A digital topological group homomorphism $f : G \rightarrow H$ is called a {\it digital topological group isomorphism} if it is a digital isomorphism; that is, there exists a digital topological group homomorphism $f^{-1} : H \rightarrow G$ such that
$$
f^{-1} \circ f = 1_G
$$
and
$$
f \circ f^{-1} = 1_H.
$$
In this case, $G$ is said to be {\it digital topological group isomorphic} to $H$.
\end{definition}

Let $N$ be a subgroup, group-theoretically, of a digital topological group $G$.  
We define an adjacency relation on $N$ as follows.

\begin{definition}\label{subsetdef}
Let $N$ be a subgroup of a digital topological group $G = (G, \kappa)$. We define an adjacency relation $\kappa\vert_{N}$ on $N$ by the restriction of $\kappa$ to $N$. Using this restricted adajcency relation, $N$ becomes a digital topological group, which we call a \emph{digital topological subgroup} of $G$.
\end{definition}

%
%

Note that a digital topological subgroup $(N,\kappa\vert_N)$ of $(G, \kappa)$ is not necessarily digital connected even if $(G, \kappa)$ is $\kappa$-connected.

Let $N$ be a normal digital topological subgroup of a digital topological group $G$.  
We define an adjacency relation on $G/N$ as follows.

\begin{definition}\label{quodef}
Let $G = (G, \kappa)$ be a digital topological group and $N$ a normal digital topological subgroup, and let $\pi:G \to G/N$ be the canonical surjection. We define an adjacency relation $\bar \kappa$ on $G/N$ by declaring $\bar x \adj_{\bar \kappa} \bar y$ whenever there are $x\in \pi^{-1}(\bar x)$ and $y \in \pi^{-1}(\bar y)$ with $x \adj_\kappa y$ in $G$.  
\end{definition}

\begin{prop}\label{LS101}
Let $(G,\kappa)$ be an $\NP_i$-digital topological group, with an $\NP_i$-digital topological subgroup $N$ for some $i \in \{1,2\}$. Then the quotient group $G/N$ is an $\NP_i$-digital topological group with adjacency relation $\bar \kappa$ as in Definition \ref{quodef}.
\end{prop}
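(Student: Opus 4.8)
The plan is to invoke Lemma \ref{inversecontinuous}, so that it suffices to prove the quotient multiplication $\mu_{G/N}$ is $(\NP_i(\bar\kappa,\bar\kappa),\bar\kappa)$-continuous; continuity of the inverse on $G/N$ then comes for free. The one structural fact I would record at the outset is that the canonical surjection $\pi:G\to G/N$ is a group homomorphism which is $(\kappa,\bar\kappa)$-continuous: if $x\adj_\kappa y$ in $G$ then either $\pi(x)=\pi(y)$, or $\pi(x)\neq\pi(y)$ and then $\pi(x)\adj_{\bar\kappa}\pi(y)$ directly from Definition \ref{quodef} (with $x,y$ themselves serving as the required representatives), so in all cases $\pi(x)\adjeq_{\bar\kappa}\pi(y)$.

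The heart of the argument is a lifting step. Suppose $(\bar x,\bar y)$ and $(\bar x',\bar y')$ are $\NP_i(\bar\kappa,\bar\kappa)$-adjacent in $(G/N)\times(G/N)$; by definition their coordinates agree in some positions and are $\bar\kappa$-adjacent in at most $i$ positions. I would lift this to $G\times G$ coordinate by coordinate: in a position where the two quotient entries are equal, pick a single common representative; in a position where they are $\bar\kappa$-adjacent, Definition \ref{quodef} supplies representatives in $G$ that are actually $\kappa$-adjacent. Because the two factors of the product are independent, these choices can be made separately, and the resulting lifted pairs $(x_0,y_0)$ and $(x_0',y_0')$ are equal in exactly the positions where the quotient pairs were equal and $\kappa$-adjacent in exactly the positions where they were $\bar\kappa$-adjacent. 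In particular the number of adjacent coordinates is unchanged, so $(x_0,y_0)\adj_{\NP_i(\kappa,\kappa)}(x_0',y_0')$ in $G\times G$.

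To finish, I would apply the $\NP_i$-continuity of $\mu_G$ to conclude $x_0y_0\adjeq_\kappa x_0'y_0'$, and then push forward along the continuous homomorphism $\pi$ to get $\overline{x_0y_0}\adjeq_{\bar\kappa}\overline{x_0'y_0'}$. Since $\pi$ is a homomorphism and $x_0\in\bar x$, $y_0\in\bar y$, and likewise for the primed entries, we have $\overline{x_0y_0}=\bar x\,\bar y=\mu_{G/N}(\bar x,\bar y)$ and $\overline{x_0'y_0'}=\mu_{G/N}(\bar x',\bar y')$, which is exactly the required $\bar\kappa$-continuity of $\mu_{G/N}$.

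The step I expect to need the most care is the $i=2$ case of the lifting, where both coordinates may be $\bar\kappa$-adjacent simultaneously: here the witnesses realizing $\bar x\adj_{\bar\kappa}\bar x'$ and $\bar y\adj_{\bar\kappa}\bar y'$ are chosen independently, and one must check that combining two independently chosen pairs of $\kappa$-adjacent representatives still yields an $\NP_2$-adjacent pair in $G\times G$ --- which it does, precisely because $\NP_2$ permits adjacency in both coordinates at once. A minor point worth flagging is that $\bar\kappa$ must be read as an adjacency relation in the sense of the paper, i.e.\ antireflexive, so Definition \ref{quodef} is understood to apply only to distinct cosets $\bar x\neq\bar y$; this causes no difficulty, since continuity is stated with $\adjeq$ and never requires distinguishing adjacency from equality in the codomain.
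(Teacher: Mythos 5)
Your proposal is correct and takes essentially the same route as the paper: induce $\mu_{G/N}$ and $\iota_{G/N}$ from the operations on $G$, then verify continuity by lifting $\bar\kappa$-adjacencies to $\kappa$-adjacent representatives via Definition \ref{quodef}, applying $\NP_i$-continuity of $\mu_G$, and pushing forward along the continuous surjection $\pi$. If anything, your write-up is more complete than the paper's, which asserts the key step $\overline{x_1 y_1} \adj_{\bar\kappa} \overline{x_2 y_2}$ without spelling out the coordinate-by-coordinate lifting (including the $i=2$ case you rightly flag), and which checks continuity of the inverse directly where you invoke Lemma \ref{inversecontinuous} to obtain it for free.
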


\begin{proof}
We note that the projection map
$$
\pi : (G, \kappa) \rightarrow (G/K, \bar \kappa)
$$
given by
$
\pi (x) = \bar x
$
is $(\kappa, \bar\kappa)$-continuous by Definition \ref{quodef}; that is, if 
$
x \adj_\kappa y
$ 
in $G$, then 
$
\bar x \adj_{\bar \kappa} \bar y.
$

Let $\mu_G : G \times G \rightarrow G$ and $\iota_G : G \rightarrow G$ be the digital multiplication and the digital inverse, respectively, as in Definition \ref{dtgdefinition}. 
Define 
$$
\mu_{G/N} : G/N \times G/N \rightarrow G/N
$$
and
$$
\iota_{G/N} : G/N \rightarrow G/N
$$
by
$$
\mu_{G/N} (\bar x, \bar y) = \overline{xy} 
$$ 
and
$$
\iota_{G/N} (\bar x) = \overline{x^{-1}},
$$ 
respectively, for all $x,y \in G$. 
It can be seen that the following diagrams 
$$
\xymatrix@C=13mm @R=11mm{
G \times G \ar[d]_-{\pi \times \pi} \ar[r]^-{\mu_G} &G  \ar[d]^-{\pi} &\mathrm{and} & G \ar[d]_-{\pi} \ar[r]^-{\iota_G} &G  \ar[d]^-{\pi}\\ 
G/N \times G/N  \ar[r]^-{\mu_{G/N}} &G/N  && G/N  \ar[r]^-{\iota_{G/N}} &G/N
}
$$
are commutative. Moreover, if 
$$
(\bar x_1, \bar y_1) \adj_{\NP_i (\bar \kappa, \bar \kappa)} (\bar x_2, \bar y_2)
$$ 
in $G/N \times G/N$ for some $i \in \{1,2\}$, and
$$
\bar z_1 \adj_{\bar \kappa} \bar z_2
$$ 
in $G/N$, 
then we have
$$
\mu_{G/N} (\bar x_1, \bar y_1) = \overline{x_1 y_1} \adj_{\bar \kappa} \overline{x_2 y_2} = \mu_{G/N} (\bar x_2, \bar y_2)
$$
and
$$
\iota_{G/N}(\bar z_1 ) = \overline{z_1} \adj_{\bar \kappa} \overline{z_2} = \iota_{G/N}(\bar z_2 ). 
$$
Therefore, $\mu_{G/N}$ and $\iota_{G/N}$ are $(\NP_i (\bar \kappa, \bar\kappa), \bar \kappa)$-continuous and $(\bar \kappa, \bar \kappa)$-continuous functions, respectively, for some $i \in \{1,2\}$, as required.
\end{proof}

We note that, when $G$ is a digital topological group embedded in $(\Z^n,c_u)$ and $N$ is a normal subgroup, the quotient group $G/N$ may not nicely take the form of a digital image in $\Z^n$ with $c_u$ adjacency. 

For example, let $G \subset (\Z^2,c_1)$ be the simple closed curve of 16 points in Figure \ref{quotientfig}. We will label the points of $G$ as $x_0,x_1,\dots,x_{15}$ with $x_i \adj x_{i+1}$ with subscripts read modulo $16$, so that $G$ is isomorphic as a group to $\Z_{16}$. Let $N = \{x_0,x_4,x_8,x_{12}\}$, indicated by the circled points in Figure \ref{quotientfig}. Choosing $\{\bar x_0,\bar x_1,\bar x_2,\bar x_3\}$ as coset representatives, the digital image $(G/N,\bar c_1)$ is not naturally situated as a digital image in $\Z^2$ with $c_1$-adjacency. (Though by Theorem \ref{embeddingthm}, it will be isomorphic to some digital image in $\Z^n$ with $c_n$-adjacency. In fact it is a simple closed curve of 4 points.)

\begin{figure}[h]
\begin{tikzpicture}[scale=.7]
\draw[densely dotted] (-3,-3) grid (3,3);
\draw (-2,-2) rectangle (2,2);
\foreach \i in {-2,2} {
 \foreach \j in {-2,...,2} {
   \fill (\i,\j) circle (0.08);
   \fill (\j,\i) circle (0.08);
   }
}
\foreach \p in {(2,0), (0,2), (-2,0), (0,-2)} {
 \draw \p circle (0.16);
}
\node[below right] at (2,0) {$x_0$};
\node () at (0,-4) {(a)};
\end{tikzpicture}
\qquad
\begin{tikzpicture}[scale=.7]
\draw[densely dotted] (-3,-3) grid (3,3);
\foreach \p in {(2,0), (2,1), (2,2), (1,2)} {
 \fill \p circle (0.08);
}
\draw (2,0) -- (2,2) -- (1,2) -- (2,0);

\node[below right] at (2,0) {$\bar x_0$};
\node () at (0,-4) {(b)};
\end{tikzpicture}
\caption{(a) The digital topological group $G\subset (\Z^2,c_1)$ isomorphic to $\Z_{16}$, with points from the subgroup $N\cong \Z_4$ circled. (b) The quotient group $G/N$, with adjacencies given by $\bar c_1$\label{quotientfig}}
\end{figure}
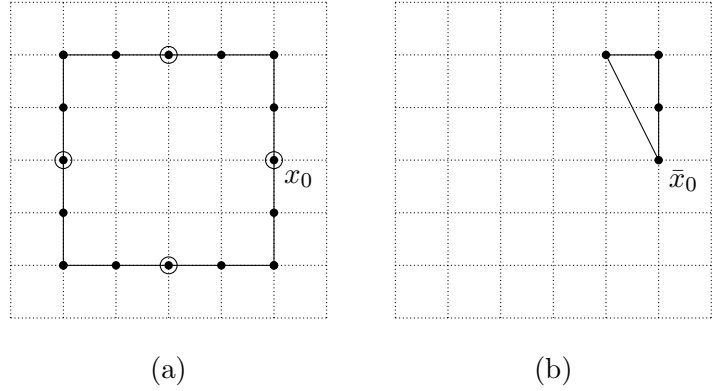

For sets $A,B\subset (X,\kappa)$ in some digital image, we will say that $A$ and $B$ are adjacent if there exists points $a\in A$ and $b\in B$ with $a\adjeq b$. In this case we write $A\adjeq B$. 

\begin{definition}
We call a $(\kappa, \lambda)$-continuous function $f : (X, \kappa) \to (Y, \lambda)$ a \emph{digital open map} when: if $z,w\in f(G)$ with $z\adj w$, then $f^{-1}(z)\adjeq f^{-1}(w)$.
\end{definition}

Thus an \emph{open map} is one for which $f^{-1}$ preserves adjacencies in the same way that a continuous map does. 
The definition of digital open map is related to continuity of the preimage $f^{-1}$ viewed as a multivalued map $f^{-1}:Y \to 2^Y;$ see \cite[Theorems 2.4 and 2.7]{bs16}. We use the term open map because it is required for Proposition \ref{LS102}, which in the classical theory requires that the map be open. 

The following is the digital version of the first isomorphism theorem from group (or module) theory.

\begin{thm}\label{LS102}
Let $f : (G, \kappa) \rightarrow (H, \lambda)$ be a digital topological group homomorphism with kernel $K$. If $f$ is a digital open map,
then there exists a unique digital topological group isomorphism 
$$
F : G/K \rightarrow {\rm Im} f
$$ 
such that 
$$
F \circ \pi (x) = f(x)
$$ 
for all $x \in G$, where 
$$
\pi : (G,\kappa) \rightarrow (G/N, \bar \kappa)
$$ 
is the $(\kappa, \bar\kappa)$-continuous function given by $\pi (x) = \bar x$.
\end{thm}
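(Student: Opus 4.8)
The plan is to obtain $F$ from the classical first isomorphism theorem and then verify the two digital continuity conditions separately, with the open-map hypothesis entering only in the second. First I would define $F(\bar x) = f(x)$; this formula is forced by the requirement $F\circ\pi = f$ together with the surjectivity of $\pi$, and that same observation immediately yields the asserted uniqueness of $F$. Well-definedness and the fact that $F$ is a group isomorphism onto $\Im f$ are precisely the content of the ordinary first isomorphism theorem from group theory (if $\bar x = \bar y$ then $x = ky$ for some $k\in K$, so $f(x)=f(y)$; and $f(x)=e_H$ forces $x\in K$, giving injectivity), so I would cite this and devote the real work to the topological statements.

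Next I would check that $F$ is $(\bar\kappa,\lambda)$-continuous. Suppose $\bar x \adj_{\bar\kappa} \bar y$. By Definition \ref{quodef} there are representatives $x'\in\pi^{-1}(\bar x)$ and $y'\in\pi^{-1}(\bar y)$ with $x'\adj_\kappa y'$. Continuity of $f$ then gives $f(x')\adjeq_\lambda f(y')$, and since $F(\bar x)=f(x')$ and $F(\bar y)=f(y')$ this reads exactly $F(\bar x)\adjeq_\lambda F(\bar y)$. Hence $F$ is a continuous group homomorphism, i.e.\ a digital topological group homomorphism.

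The main obstacle is continuity of the inverse $F^{-1}:\Im f\to G/K$, and this is exactly where the open-map hypothesis is indispensable (mirroring the role of openness in the classical argument). Take $z,w\in\Im f$ with $z\adj_\lambda w$. Because $f$ is a digital open map, $f^{-1}(z)\adjeq f^{-1}(w)$, so there exist $a\in f^{-1}(z)$ and $b\in f^{-1}(w)$ with $a\adjeq_\kappa b$; antireflexivity of $\lambda$ and $z\neq w$ exclude $a=b$, so in fact $a\adj_\kappa b$, and injectivity of $F$ forces $\bar a\neq\bar b$. Applying Definition \ref{quodef} gives $\bar a\adj_{\bar\kappa}\bar b$, and since $F^{-1}(z)=\bar a$ and $F^{-1}(w)=\bar b$ we obtain $F^{-1}(z)\adjeq_{\bar\kappa}F^{-1}(w)$. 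Thus $F^{-1}$ is a continuous group homomorphism as well, so $F$ is a digital topological group isomorphism, which together with the uniqueness noted above completes the proof.
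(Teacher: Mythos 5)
Your proof is correct and follows essentially the same route as the paper's: obtain $F$ and its uniqueness from the classical first isomorphism theorem, deduce continuity of $F$ from continuity of $f$ via Definition \ref{quodef}, and deduce continuity of $F^{-1}$ from the open-map hypothesis. In fact your treatment of the last step is slightly more careful than the paper's, since you explicitly rule out $a=b$ (using antireflexivity of $\lambda$) before invoking Definition \ref{quodef}, a point the paper leaves implicit.
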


\begin{proof}
From the first isomorphism theorem from group theory, we see that there exists a unique isomorphism of groups
$$
\xymatrix@C=10mm @R=10mm{
F : G/K \ar@{->>}[r]^-{\cong} &{\rm Im} f \subseteq H
}
$$
such that the following diagram
$$
\xymatrix@C=20mm @R=15mm{
G \ar[d]_-{\pi} \ar@{->>}[r]^-{f} &{\rm Im} f \subseteq H  \\
G/K  \ar@{->>}[ur]_-{F} }
$$
is commutative in the category of groups and group homomorphisms in the sense that
$$
f(x) = F \circ \pi (x) = F (\bar x)
$$
for all $x \in G$, where $\Im f$ is a digital topological subgroup of $(H,\lambda)$. We need only show that $F$ is a digital isomorphism. 

To show $F$ is continuous, take $\bar x \adj_{\bar \kappa} \bar y$ in $(G/K, \bar \kappa)$. Then 
there are some $x\in \pi^{-1}(x)$ and $y\in \pi^{-1}(y)$ with $x\adj_\kappa y$. Since $f$ is continuous, we will have $f(x)\adj_\lambda f(y)$. Thus we have
$$
F (\bar x) = f(x) \adj_\lambda f(y) = F (\bar y);
$$
and so $F$ is continuous.

Now to show $F^{-1}$ is continuous, take $f(x)\adj_\lambda f(y)$ in $\Im f$, and we will show that 
$$
F^{-1}(f(x)) \adj_{\bar \kappa} F^{-1}(f(y))
$$
in $(G/K, \bar \kappa)$. Since $F^{-1}\circ f =\pi$, we must show that 
$$
\bar x \adj_{\bar \kappa} \bar y.
$$
Since $f$ is a digital open map and $f(x)\adj_\lambda f(y)$, we have $f^{-1}(f(x)) \adjeq f^{-1}(f(y))$, and thus there is some $a \in f^{-1}(f(x))$ and $b \in f^{-1}(f(y))$ so that
$$
\bar x = \pi (x) = \pi (a) = \bar a \adj_{\bar \kappa} \bar b = \pi(b) = \pi(y) = \bar y 
$$
in $(G/K, \bar \kappa)$ as required.
\end{proof} 


As the following example shows, not all continuous homomorphisms of digital topological groups are open maps, and the conclusion of Theorem \ref{LS102} may fail when the map is not open. 

\begin{exa}\label{nonopen}
Let 
$$
X := \{(1,0),(0,1),(-1,0),(0,-1)\}
$$
be a digital image, pictured in Figure \ref{nonopenfig}. 
\begin{figure}[h]
\begin{tikzpicture}
\draw[densely dotted] (-2,-2) grid (2,2);
\node[below right] at (1,0) {$(1,0)$};\
\node[above right] at (0,1) {$(0,1)$};
\node[below left] at (-1,0) {$(-1,0)$};
\node[below right] at (0,-1) {$(0,-1)$};
\foreach \p in {(1,0),(0,1),(-1,0),(0,-1)} {
 \fill \p circle (.10cm);
}
\end{tikzpicture}
\caption{The digital image $X$ in $\mathbb{Z}^2$ from Example \ref{nonopen}.\label{nonopenfig}}
\end{figure}
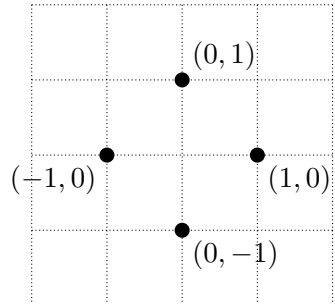
The digital image $(X,c_2)$ is a simple cycle of 4 points, while $(X,c_1)$ is a set of 4 never-adjacent points. Interpreting points of $X$ as complex numbers, the complex multiplication makes $X$ into a group isomorphic to $\Z_4$. It is easy to verify that this multiplication makes both $(X,c_1)$ and $(X,c_2)$ into $\NP_1$-digital topological groups. 

Let $f:(X,c_1) \to (X,c_2)$ be the identity map. Then clearly $f:X\to X$ is a continuous homomorphism, but it does not satisfy the open map condition. For example we have $(1,0) \adj_{c_2} (0,1)$, but $f^{-1}(1,0)$ and $f^{-1}(0,1)$ have no $c_1$-adjacent points. 

In this case the conclusion of Proposition \ref{LS102} will fail: the kernel of $f$ is trivial, so we have $(X,c_1)/\ker f \cong (X,c_1)$, while $\Im(f) \cong (X,c_2)$. But $f:(X,c_1) \to (X,c_2)$ is not a digital isomorphism.
\end{exa}

The following theorem shows that a group homomorphism (not assumed to be continuous) of digital topological groups is automatically continuous provided that it is continuous near the identity element. This is analogous to the classical fact that a homomorphism of topological groups will be continuous if it is assumed to be continuous in a neighborhood of the identity element. 
\begin{thm}
Let $G$ and $H$ be digital topological groups with identity elements $e_G$ and $e_H$, respectively, and let $f:G\to H$ be a group homomorphism with the property that: if $x\adj e_G$, then $f(x)\adjeq e_H$. Then $f$ is continuous and thus is a digital topological group homomorphism.
\end{thm}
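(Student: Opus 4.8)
The plan is to exploit homogeneity: the hypothesis only controls $f$ on the neighbors of $e_G$, so I would use the translation isomorphisms of Theorem \ref{multiso} to slide an arbitrary adjacency down to the identity, apply the hypothesis there, and then slide the resulting conclusion back up into $H$. Concretely, to prove continuity I would start with an arbitrary adjacent pair $x\adj_\kappa y$ in $G$ and aim to show $f(x)\adjeq_\lambda f(y)$.

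First I would translate to the identity in the domain. By Theorem \ref{multiso} the left-translation $\mu_{x^{-1}}:G\to G$ is a $(\kappa,\kappa)$-isomorphism, and since isomorphisms preserve adjacency in both directions, applying $\mu_{x^{-1}}$ to $x\adj_\kappa y$ yields $e_G=x^{-1}x \adj_\kappa x^{-1}y$. Thus $x^{-1}y$ is a neighbor of $e_G$, and the hypothesis applies directly to give $f(x^{-1}y)\adjeq_\lambda e_H$.

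Next I would use that $f$ is a group homomorphism to rewrite $f(x^{-1}y)=f(x)^{-1}f(y)$, so the previous line reads $f(x)^{-1}f(y)\adjeq_\lambda e_H$. Finally I would translate back up in the codomain: by Theorem \ref{multiso} applied to $H$, the left-translation $\mu_{f(x)}:H\to H$ is a $(\lambda,\lambda)$-isomorphism, and since such isomorphisms preserve the ``adjacent-or-equal'' relation $\adjeq$ (equality maps to equality, adjacency to adjacency), applying $\mu_{f(x)}$ gives
\[ f(y) = f(x)\bigl(f(x)^{-1}f(y)\bigr) \adjeq_\lambda f(x)\,e_H = f(x), \]
which is exactly the desired $f(x)\adjeq_\lambda f(y)$. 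This establishes $(\kappa,\lambda)$-continuity, and combined with the homomorphism assumption shows $f$ is a digital topological group homomorphism.

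I do not anticipate a serious obstacle here; the argument is a direct two-sided application of the translation isomorphisms, and it is worth noting that it goes through identically whether $G$ and $H$ are $\NP_1$- or $\NP_2$-digital topological groups, since Theorem \ref{multiso} supplies the translation isomorphisms in either case. The only point requiring minor care is the bookkeeping of the $\adjeq$ relation (as opposed to strict $\adj$) through the isomorphism $\mu_{f(x)}$, since the hypothesis gives only $f(x^{-1}y)\adjeq_\lambda e_H$; but because digital isomorphisms respect both adjacency and equality, this causes no difficulty.
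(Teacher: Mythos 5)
Your proof is correct and follows essentially the same route as the paper's: translate the adjacency $x\adj y$ to the identity via the isomorphism of Theorem \ref{multiso}, apply the hypothesis and the homomorphism property, then translate back in $H$. In fact your bookkeeping is slightly more careful than the paper's, which writes $e_G \adj xy^{-1}$ after applying $\mu_{x^{-1}}$ and then uses $\mu_{f(y)}$ (rather than a right translation) to finish; your consistent use of left translations $\mu_{x^{-1}}$ and $\mu_{f(x)}$ avoids that left/right slip.
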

\begin{proof}
Take $x,y\in G$, and we must show that $f(x)\adjeq f(y)$ in $H$. Since $x\adj y$, we can apply the isomorphism $\mu_{x^{-1}}$ to obtain $e_G \adj xy^{-1}$. Then by our assumption on $f$ we have $e_H \adjeq f(xy^{-1}) = f(x)f(y)^{-1}$. Then applying the isomorphism $\mu_{f(y)}$ gives $f(y)\adj f(x)$ as desired.
\end{proof}

We note that we can construct the category $\mathcal{DTG}$ whose object and morphism classes consist of digital topological groups and digital topological group homomorphisms, respectively.

\bigskip

\section{Classification of $\NP_2$-digital topological groups}\label{cdtg}

The $\NP_2$-continuity required for an $\NP_2$-digital topological group is very restrictive, and we can prove a complete classification of these groups. 
Recall that a \emph{cluster graph} \cite{SST} is a graph which is a disjoint union of complete graphs.

In this section we show that a digital image is an $\NP_2$-digital topological group if and only if it is a regular cluster graph. The result is based on the following lemma:

\begin{lem}\label{LS10}
If $G := (G,\kappa)$ is a connected $\NP_2$-digital topological group, then every element of $G$ is adjacent or equal to the identity element $e\in G$. 
\end{lem}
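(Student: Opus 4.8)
The plan is to introduce the set $N = \{x \in G : x \adjeq e\}$ of all elements adjacent or equal to the identity, and to prove that in fact $N = G$. Since $e \in N$ trivially and $G$ is connected, it will suffice to establish a single closure property: whenever $x \in N$ and $y \adj x$, we also have $y \in N$. Once this is shown, a routine induction along any $\kappa$-path from $e$ to an arbitrary element forces that element into $N$, giving $N = G$, which is exactly the statement of the lemma.

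The key preliminary step is to use the $\NP_2$-continuity of $\mu_G$ to show that $N$ is closed under the group multiplication. This is precisely where the $\NP_2$ hypothesis does the real work: if $a \adj e$ and $b \adj e$, then the pairs $(a,b)$ and $(e,e)$ are adjacent in \emph{both} coordinates simultaneously, so they are $\NP_2$-adjacent (though not $\NP_1$-adjacent). Applying $\NP_2$-continuity of $\mu_G$ then yields
$$ ab = \mu_G(a,b) \adjeq \mu_G(e,e) = e, $$
so $ab \in N$. Together with the immediate cases in which $a$ or $b$ equals $e$, this gives multiplicative closure of $N$.

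With multiplicative closure available, I would deduce the adjacency-closure property as follows. Suppose $x \in N$ and $y \adj x$. By Theorem \ref{multiso} the left translation $\mu_{x^{-1}}$ is a digital isomorphism, so applying it to $y \adj x$ gives $x^{-1}y \adj x^{-1}x = e$, whence $x^{-1}y \in N$. Writing $y = x \cdot (x^{-1}y)$ as a product of two elements of $N$ and invoking the multiplicative closure just established, we conclude $y \in N$ (the degenerate cases $x = e$ or $x^{-1}y = e$ being immediate). Finally, connectivity closes the argument: for any $z \in G$, pick a $\kappa$-path $e = z_0, z_1, \dots, z_s = z$; since $z_0 \in N$ and each step $z_{i+1} \adjeq z_i$ preserves membership in $N$, induction gives $z \in N$.

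The main obstacle — indeed essentially the only nonroutine point — is recognizing that the simultaneous two-coordinate adjacency permitted under $\NP_2$ is exactly what makes $N$ closed under multiplication; this is the feature that fails for $\NP_1$ and that underlies the restrictiveness of the $\NP_2$ condition, as already foreshadowed by Example \ref{sccexa}. Everything else is a standard combination of the translation isomorphisms of Theorem \ref{multiso} with the principle that a set containing $e$ and closed under passing to neighbors must be all of a connected image.
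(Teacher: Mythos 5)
Your proof is correct and rests on essentially the same mechanism as the paper's: apply $\NP_2$-continuity of $\mu_G$ to a pair of points that are adjacent in \emph{both} coordinates, combine with the translation isomorphisms of Theorem \ref{multiso}, and propagate the conclusion along a $\kappa$-path from $e$. The paper packages this as a minimal-path contradiction (applying $\mu_G$ to $(e,x_1)\adj_{\NP_2}(x_1,x_2)$ and then translating by $x_1^{-1}$), while you translate first and then induct along the path via the set $N$; this is only a cosmetic difference in organization.
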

\begin{proof}
Let $x\in G$, and we will show that $x$ is adjacent or equal to $e$. To obtain a contradiction, assume that $x$ is not adjacent or equal to $e$, and let $(x_0,x_1,\dots,x_n)$ be a minimal path in $G$ from $x_0=e$ to $x_n=x$ with $n\ge 2$. Minimality of the path implies that $x_2$ is not equal or adjacent to $e$.

Since $(e,x_1)$ is adjacent to $(x_1,x_2)$ in $(G\times G,\NP_2(\kappa,\kappa))$, and the multiplication is $\NP_2$-continuous, we see that $x_1$ is adjacent or equal to $x_1x_2$. Applying the isomorphism $\mu_{x_1^{-1}}$ shows that $e$ is adjacent or equal to $x_2$, which is a contradiction.
\end{proof}

The result above can be improved:

\begin{thm}
If $G$ is a connected $\NP_2$-digital topological group, then $G$ is a complete graph.
\end{thm}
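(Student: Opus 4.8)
The plan is to promote the conclusion of Lemma \ref{LS10} — that every element is adjacent or equal to the identity $e$ — to full adjacency of \emph{every} pair of distinct points, by transporting the distinguished role of $e$ to an arbitrary base point via left translation. Since a complete graph is precisely one in which all distinct pairs are adjacent, it suffices to fix distinct $x,y\in G$ and show $x\adj y$.

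First I would form the element $z = x^{-1}y$. Because $x\neq y$ we have $z\neq e$, so the ``or equal'' alternative in Lemma \ref{LS10} is excluded, and we obtain the genuine adjacency $e\adj z$.

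Next I would invoke Theorem \ref{multiso}: the left-translation map $\mu_x:G\to G$ is a digital isomorphism, and in particular an isomorphism carries adjacent points to adjacent points (indeed $a\adj b$ if and only if $\mu_x(a)\adj \mu_x(b)$). Applying $\mu_x$ to $e\adj z$ yields $\mu_x(e)\adj \mu_x(z)$, that is, $x\adj xz = y$. As $x,y$ were an arbitrary pair of distinct elements, this shows every two distinct points of $G$ are adjacent, so $G$ is a complete graph.

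I do not expect a serious obstacle here: the essential content has already been extracted in Lemma \ref{LS10}, and what remains is exactly the one-line translation trick that is the recurring engine of this section (vertex-transitivity realized through $\mu_x$). The only matters requiring care are purely bookkeeping — confirming $z\neq e$ so that adjacency rather than mere equality is produced, and using that a digital isomorphism preserves adjacency — both of which are immediate from the results already established.
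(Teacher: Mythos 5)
Your proof is correct, but its final step differs from the paper's. Both arguments rest on Lemma \ref{LS10}, yet the paper applies it \emph{twice} — to $x$ and to $y$ separately — obtaining $x \adjeq e$ and $y \adjeq e$, and then uses the $\NP_2$-continuity of $\mu_G$ a second time: the pairs $(e,x)$ and $(y,e)$ are $\NP_2$-adjacent (or equal), so their products $x$ and $y$ are adjacent or equal. You instead apply the lemma \emph{once}, to $z = x^{-1}y$, and finish with the translation isomorphism $\mu_x$ from Theorem \ref{multiso}. The difference is real: after the lemma, your argument never invokes two-variable $\NP_2$-continuity again — translation isomorphisms exist in any digital topological group, so you have isolated all use of the $\NP_2$ hypothesis inside Lemma \ref{LS10}, which makes the logical structure (``the identity's neighborhood is everything, then homogeneity spreads this to every point'') transparent and mirrors the classical local-to-global pattern for topological groups. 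The paper's version is marginally more economical (no need to pass to $x^{-1}y$ or to exclude the equality case) and showcases the same $\NP_2$ trick that drives the lemma itself. Your bookkeeping is also sound: $x \neq y$ forces $x^{-1}y \neq e$, antireflexivity of the adjacency relation then upgrades ``adjacent or equal'' to ``adjacent,'' and the paper's stated fact that isomorphisms preserve adjacency in both directions justifies $x \adj xz = y$.
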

\begin{proof}
Let $x, y \in G$, and we must show that $x$ is adjacent or equal to $y$. Lemma \ref{LS10} above shows that $x$ and $y$ must be adjacent to $e$. Thus $(e,x)$ is $\NP_2$-adjacent to $(y,e)$, so multiplication shows that $x$ is adjacent to $y$ as desired.
\end{proof}

By Corollary \ref{componentsisom}, any digital topological group is a disjoint union of components, each isomorphic to $G_e$. Thus we obtain:

\begin{cor}
If $G$ is an $\NP_2$-digital topological group, then $G$ is a regular cluster graph. 
\end{cor}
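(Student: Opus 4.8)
The plan is to combine the theorem just established (a connected $\NP_2$-digital topological group is a complete graph) with the earlier decomposition result, Corollary \ref{componentsisom}, which asserts that every connected component of a digital topological group is digitally isomorphic to the identity component $G_e$. A cluster graph is a disjoint union of complete graphs, and such a graph is regular precisely when all of its complete components have the same number of vertices; so it suffices to show that every component of $G$ is a complete graph and that all these complete graphs are the same size.

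First I would view $G$, as a digital image, as the disjoint union of its connected components, between which there are no adjacencies. To apply the preceding theorem I need each component to itself be a connected $\NP_2$-digital topological group. For the identity component this is immediate: by Theorem \ref{normal}, $G_e$ is a (normal) subgroup of $G$, hence by Definition \ref{subsetdef} it is a digital topological subgroup, and since its multiplication is just the restriction of the $\NP_2$-continuous $\mu_G$ to $G_e\times G_e$ with the restricted adjacency, $G_e$ inherits $\NP_2$-continuity. The preceding theorem then forces the connected image $G_e$ to be a complete graph.

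Next I would invoke Corollary \ref{componentsisom}: every component $G_x$ is digitally isomorphic to $G_e$, and a digital isomorphism preserves the graph structure (adjacency in both directions), so each $G_x$ is a complete graph of the same size as $G_e$. Hence $G$ is a disjoint union of complete graphs of one common size $m$, so every vertex has degree $m-1$ and $G$ is a regular cluster graph. (Regularity also follows independently from the corollary to Theorem \ref{regulargraph}.)

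I do not anticipate a genuine obstacle here; the only step that requires a moment's care is confirming that the relevant component carries the $\NP_2$-structure needed to apply the completeness theorem, and this is settled by recognizing $G_e$ as a digital topological subgroup on which $\mu_G$ restricts to an $\NP_2$-continuous operation. The rest is the routine bookkeeping of assembling mutually isomorphic complete components into a disjoint union.
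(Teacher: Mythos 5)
Your proof is correct and follows essentially the same route as the paper: apply the completeness theorem to the identity component and then use Corollary \ref{componentsisom} to transport completeness (and a common size) to every component. In fact you are slightly more careful than the paper, which leaves implicit the step you spell out—that $G_e$, as a digital topological subgroup, inherits the $\NP_2$-structure needed to invoke the completeness theorem.
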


We also have a converse to the above: any set which is a regular cluster graph can be given the structure of an $\NP_2$-digital topological group.

\begin{thm}
Let $X := (X,\kappa)$ be a regular cluster graph. Then there are operations $\mu_X:X\times X \to X$ and $\iota_X:X\to X$ which make $X$ into an $\NP_2$-digital topological group.
\end{thm}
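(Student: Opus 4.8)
The plan is to realize $X$ explicitly as a direct product group and equip it with the componentwise operations. First I would extract the combinatorial parameters of $X$: since $X$ is a regular cluster graph, it is a disjoint union of complete graphs, and regularity forces all of these complete components to have the same number of vertices (the degree of a vertex in a component $K_m$ is $m-1$, so the components all have some common size $m$). Write $k$ for the number of components, so $|X| = mk$.

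Next I would choose any groups $A$ and $B$ of orders $m$ and $k$, respectively (for instance the cyclic groups $\Z_m$ and $\Z_k$), and consider the direct product group $A\times B$. Equipping the set $A\times B$ with the adjacency relation in which $(a,b)\adj(a',b')$ precisely when $b=b'$ and $a\neq a'$ produces a regular cluster graph whose components are the ``slices'' $A\times\{b\}$, each a complete graph on $m$ vertices, with exactly $k$ components. Since any two regular cluster graphs with the same component size and the same number of components are isomorphic as graphs, I can fix a graph isomorphism $\phi:A\times B \to (X,\kappa)$ and transport the group structure of $A\times B$ onto $X$, obtaining operations $\mu_X$ and $\iota_X$; concretely these are the componentwise product and inverse of $A\times B$ read through $\phi$. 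Because $\phi$ is a graph isomorphism, I may equivalently verify all continuity claims directly on $A\times B$.

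It then remains to verify the required continuity. By Lemma \ref{inversecontinuous} it suffices to check that $\mu_X$ is $(\NP_2(\kappa,\kappa),\kappa)$-continuous, so I would take an $\NP_2$-adjacent pair $((a_1,b_1),(a_1',b_1'))$ and $((a_2,b_2),(a_2',b_2'))$ in $(A\times B)\times(A\times B)$. The definition of $\NP_2$-adjacency gives $(a_1,b_1)\adjeq(a_2,b_2)$ and $(a_1',b_1')\adjeq(a_2',b_2')$, and by the definition of the adjacency each of these adjacent-or-equal relations forces the second coordinates to agree: $b_1=b_2$ and $b_1'=b_2'$. Hence the products $(a_1a_1',b_1b_1')$ and $(a_2a_2',b_2b_2')$ share the common second coordinate $b_1b_1'=b_2b_2'$, so they lie in a single complete component of $X$ and are therefore adjacent or equal. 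This is exactly $\NP_2$-continuity of $\mu_X$.

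The main obstacle, and really the only nontrivial point, is the observation in the last step that $\NP_2$-adjacency of the two pairs forces their second coordinates to match, which is what keeps the two products inside a common complete component; everything else is routine bookkeeping. The degenerate cases $m=1$ (so $X$ is discrete) and $k=1$ (so $X$ is a single complete graph) are handled automatically and recover the trivial discrete and indiscrete $\NP_2$-digital topological groups discussed earlier.
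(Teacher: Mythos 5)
Your proposal is correct and follows essentially the same route as the paper's proof: both coordinatize $X$ by (position within component, component index), impose the product group structure (the paper uses $\Z_n\times\Z_k$, you allow arbitrary groups of the right orders), reduce to $\NP_2$-continuity of the product via Lemma \ref{inversecontinuous}, and close with the same key observation that $\NP_2$-adjacency forces the component coordinates to agree, so the products land in a common complete component. Your explicit justification that regularity forces all components to have equal size, and your handling of the degenerate cases, are small additions to the same argument.
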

\begin{proof}
Assume that $X$ consists of $k$ disjoint components, each a complete graph on $n$ vertices. We will construct coordinates for points of $X$ which label each point with a pair $(i,j)\in \Z_n\times \Z_k$, and show that, with respect to these coordinates, $X$ can be given the group structure of $\Z_n\times \Z_k$.

Choose some arbitrary point of $X$ and label it as $(0,0)$. Each of the other points in the component of $(0,0)$ will be labeled as $(i,0)$ for $i\in \Z_n$. Let $X_0,X_1,\dots, X_{k-1}$ be the components of $X$, labeled so that $(0,0)\in X_0$. For each $j\in \Z_k$, let $g_j:X_0\to X_j$ be some graph isomorphism, with $g_0$ being the identity map. Then for any $j \in \Z_k$, we label the points of $X_j$ as $$(i,j) = g_j(i,0).$$ 

Note that since $X$ is a cluster graph, two points $(i,j)$ and $(a,b)$ in $X$ are adjacent in $X$ if and only if they are in the same component, which is to say $$j=b.$$ 

Now we define the operations $\mu_X$ and $\iota_X$ so that $X$ becomes an $\NP_2$-digital topological group with group structure isomorphic to $\Z_n\times \Z_k$. We define:
\[ \mu_X((i,j),(a,b)) = (i+a,j+b) \text{ and } \iota_X(i,j) = (-i,-j). \]
Clearly these operations make $X$ into a group. By Lemma \ref{inversecontinuous} we need only show that $\mu_X$ is $\NP_2$-continuous. Take $((i,j),(a,b))$ to be $\NP_2$-adjacent to $((i',j'),(a',b'))$. This means that $j=j'$ and $b=b'$. Then their products will be $(i+a,j+b)$ and $(i'+a', j'+b')$ which are adjacent in $X$ because the second coordinates are equal. Thus $\mu_X$ is $\NP_2$-continuous as desired.
\end{proof}

\bigskip

\bigskip

\end{document}